\documentclass{article}


\PassOptionsToPackage{numbers, compress}{natbib}

\usepackage[final]{neurips_2023}




\usepackage[utf8]{inputenc} 
\usepackage[T1]{fontenc}    
\usepackage{hyperref}       
\usepackage{url}            
\usepackage{booktabs}       
\usepackage{amsfonts}       
\usepackage{nicefrac}       
\usepackage{microtype}      
\usepackage[table, svgnames]{xcolor}         
\usepackage{graphicx}
\usepackage{subcaption}
\usepackage{multirow}
\usepackage{bm}
\usepackage{wrapfig}
\usepackage{amsthm}
\usepackage{caption}
\usepackage{amsmath}
\usepackage{thmtools}
\usepackage{thm-restate}
\usepackage{wrapfig}
\usepackage{cleveref}
\usepackage{color}
\usepackage{algorithm}
\usepackage{algorithmic}

\DeclareMathOperator*{\argmax}{arg\,max}

\title{MAG-GNN: Reinforcement Learning Boosted Graph Neural Network}

%

\author{
Lecheng Kong$^{1}$~~~Jiarui Feng$^{1}$~~~Hao Liu$^{1}$~~~Dacheng Tao$^{2}$~~~\\
\textbf{Yixin Chen$^{1}$~~~Muhan Zhang$^{3}$}\\
\texttt{\{jerry.kong, feng.jiarui, liuhao, ychen25\}@wustl.edu,}\\
\texttt{dacheng.tao@gmail.com,~muhan@pku.edu.cn}\\
${}^1$Washington University in St. Louis~~~${}^2$JD Explore Academy~~~${}^3$Peking University\\}

\begin{document}

\maketitle

\begin{abstract}
  While Graph Neural Networks (GNNs) recently became powerful tools in graph learning tasks, considerable efforts have been spent on improving GNNs' structural encoding ability. A particular line of work proposed subgraph GNNs that use subgraph information to improve GNNs' expressivity and achieved great success. However, such effectivity sacrifices the efficiency of GNNs by enumerating all possible subgraphs. In this paper, we analyze the necessity of complete subgraph enumeration and show that a model can achieve a comparable level of expressivity by considering a small subset of the subgraphs. We then formulate the identification of the optimal subset as a combinatorial optimization problem and propose Magnetic Graph Neural Network (MAG-GNN), a reinforcement learning (RL) boosted GNN, to solve the problem. Starting with a candidate subgraph set, MAG-GNN employs an RL agent to iteratively update the subgraphs to locate the most expressive set for prediction. This reduces the exponential complexity of subgraph enumeration to the constant complexity of a subgraph search algorithm while keeping good expressivity. We conduct extensive experiments on many datasets, showing that MAG-GNN achieves competitive performance to state-of-the-art methods and even outperforms many subgraph GNNs. We also demonstrate that MAG-GNN effectively reduces the running time of subgraph GNNs.
\end{abstract}

\section{Introduction}
Recent advances in Graph Neural Networks (GNNs) greatly assist the rapid development of many areas, including drug discovery~\cite{drug}, recommender systems~\cite{recommend}, and autonomous driving~\cite{autodrive}. The power of GNNs has primarily been attributed to their Message-Passing Paradigm~\cite{virtualn}. Message-Passing Paradigm simulates a 1-dimensional Weisfeiler-Lehman (1-WL) algorithm for graph isomorphism testing. Such a simulation allows GNN to encode rich structural information. In many fields, structural information is crucial to determine the properties of a graph.

However, as \citet{gin} pointed out, GNN's structure encoding capability, or its expressivity, is also upper-bounded by the 1-WL test. Specifically, a message-passing neural network (MPNN) cannot recognize many substructures like cycles and paths and fails to properly learn and distinguish regular graphs. Meanwhile, these substructures are significant in areas including chemistry and biology. To overcome this limitation, considerable effort was spent on investigating more-expressive GNNs. A famous line of work is \textit{subgraph GNNs}~\cite{idgnn, seal, akgin}. Subgraph GNNs extract rooted subgraphs around every node in the graph and apply MPNN onto the subgraphs to obtain subgraph representations. The subgraph representations are summarized to form the final representation of the graph. Such an approach is theoretically proved to be more expressive than MPNN and achieved superior empirical results. Later work found that subgraph GNNs are still bounded by the 3-dimensional WL test (3-WL) \cite{sun}. 
\begin{wrapfigure}[12]{l}{0.43\textwidth}
\centering
        \includegraphics[width=0.4\textwidth, trim={0 1cm 0.5cm 0.1cm}, clip]{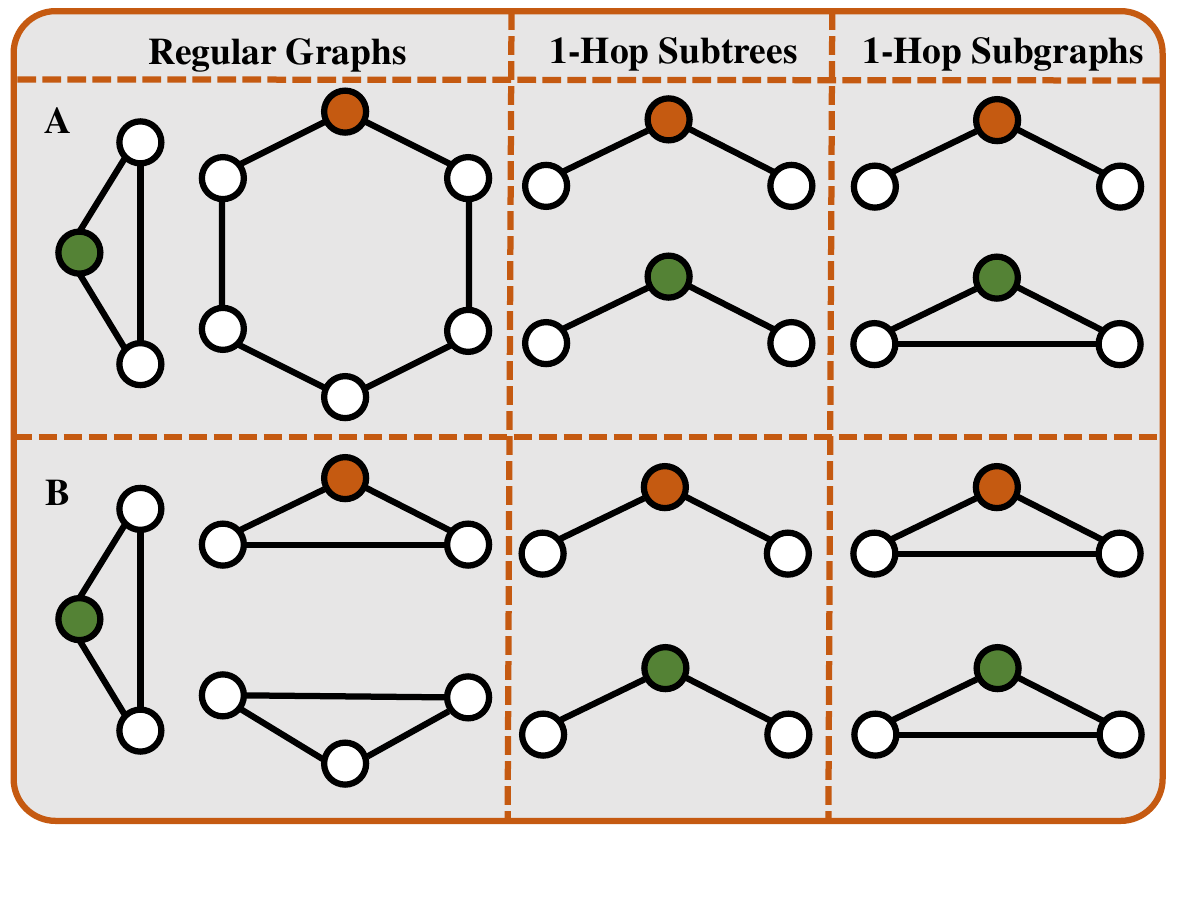}
    \caption{Comparison of two simple graphs.}
    \label{fig:intro_exp}
\end{wrapfigure}
This spurs the research on extensions to edge-rooted subgraph GNNs with better expressivity \cite{i2gnn}. In fact, by linking subgraph GNNs to the \(k\)-dimensional WL (\(k\)-WL) test hierarchy, we observe that for subgraph GNNs to exceed the limits of k-WL expressivity, it needs to enumerate and apply MPNN to subgraphs exponential to $k$~\cite{osan,sun}. Hence, the higher expressivity, unfortunately, comes with a higher computational cost, which makes it challenging to apply subgraph GNNs to even moderately sized graphs (e.g., $\sim$100 nodes). 

This makes us wonder, is it necessary for subgraph GNNs to enumerate all rooted subgraphs to achieve higher expressivity? For example, an MPNN fails to distinguish graphs A and B in Figure \ref{fig:intro_exp}, as they are 2-regular graphs with identical 1-hop subtrees. Meanwhile, a subgraph GNN will see different subgraphs around nodes in the two graphs. These subgraphs are distinguishable by MPNN, allowing a subgraph GNN to differentiate between graphs A and B. However, we can observe that many subgraphs from the same graph are identical. Specifically, graph A has two types of subgraphs, while graph B only has triangle subgraphs. As a result, locating a non-triangle subgraph in the top graph enables us to run MPNN once on it to discern the difference between the two graphs. On the contrary, a subgraph GNN takes eight extra MPNN runs for the remaining nodes. This graph pair shows that we can obtain discriminating power equal to that of a subgraph GNN without enumerating all subgraphs. We also include advanced examples with more complex structures in Section \ref{sec:motivation}.

Therefore, we propose Magnetic Graph Neural Network (MAG-GNN), a reinforcement learning (RL) based method, to leverage this property and locate the discriminative subgraphs effectively. Specifically, we start with a candidate set of subgraphs randomly selected from all rooted subgraphs. The root node features of each subgraph are initialized uniquely. 
In every step, each target subgraph in the candidate set is substituted by a new subgraph with more distinguishing power. MAG-GNN achieves this by mapping each target subgraph to a Q-Table, representing the expected reward of replacing the target subgraph with another potential subgraph. It then selects the subgraph that maximizes the reward. MAG-GNN repeats the process until it identifies the set of subgraphs with the highest distinguishing power. The resulting subgraph set is then passed to a prediction GNN for downstream tasks. MAG-GNN reduces subgraph GNN's exponentially complex enumeration procedure to an RL searching process with constant steps. This potently constrains the computational cost while keeping the expressivity.

We conduct extensive experiments on synthetic and real-world graph datasets and show that MAG-GNN achieves competitive performance to state-of-the-art (SOTA) methods and even outperforms subgraph GNNs on many datasets with a shorter runtime. Our work shows that partial subgraph information is sufficient for good expressivity, and MAG-GNN smartly locates expressive subgraphs and achieves the same goal with better efficiency.

\section{Preliminaries}
A graph can be represented as $G=\{V, E, X\}$, where $V$ is the set of nodes and $E\subseteq V\times V$ is the set of edges. Let $V(G)$ and $E(G)$ represent the node and edge sets of $G$, respectively. Nodes are associated with features $X=\{\bm{x}_v|\forall v \in V\}$. An MPNN $g$ can be decomposed into $T$ layers of COMBINE and AGGREGATE functions. Each layer uses the COMBINE function to update the current node embedding from its previous embedding and the AGGREGATE function to process the node's neighbor embeddings. Formally,
\begin{align}
     \bm{m}_v^{(t)}=\text{AGGREGATE}^{(t)}(\{\{\bm{h}_u^{(t-1)},u\in \mathcal{N}(v)\}\}),\quad \bm{h}_v^{(t)}=\text{COMBINE}^{(t)}(\bm{m}_v^{(t)}, \bm{h}_v^{(t-1)})
 \end{align}
 where $\bm{h}_v^{(t)}$ is the node representation after $t$ iterations, $\bm{h}_v^{(0)}=\bm{x}_v$, $\mathcal{N}(v)$ is the set of direct neighbors of $v$, $\bm{m}_u^{(t)}$ is the message embedding. $\bm{h}_v^{(T)}$ is used to form node, edge, and graph-level representations. We use $H=g(G)$ to denote the generated node embeddings of MPNN. MPNN's variants differ mainly by their AGGREGATE and COMBINE functions but are all bounded by 1-WL in expressivity.
 
This paper adopts the following formulation for subgraph GNNs. For a graph $G$, a subgraph GNN first enumerates all $k$-order node tuples $\{\bm{v}|\bm{v}\in V^k(G)\}$ and creates $|V^k(G)|$ copies of the graph. A graph associated with node tuple $\bm{v}$ is represented by $G(\bm{v})$. Node-rooted subgraph GNNs adopt a 1-order policy and have $O(V(G))$ graphs; edge-rooted subgraph GNNs adopt a 2-order policy and have $O(V^2(G))$ graphs. Note that we are not taking exact subgraphs here, so we need to mark the node tuples on the copied graphs to maintain the subgraph effect. Specifically,
\begin{equation}\label{eq:nm_matrix}
  [X(\bm{v})]_{l,p} =
  \begin{cases}
    c^+ & \text{if $v_l = [\bm{v}]_j$ and $p = j$} \\
    c^- & \text{otherwise}
  \end{cases} \quad \bm{v} \in V^k(G),
\end{equation}
$X(\bm{v})\in \mathbb{R}^{|V|\times k}$ and $G(\bm{v})=\{V, E, X\oplus X(\bm{v})\}$, where $\oplus$ means row-wise concatenation. We use square brackets to index into a sequence. All entries in $X(\bm{v})$ are labeled as $c^-$ except those appearing at corresponding positions of the node tuple. An MPNN $g$ is applied to every graph, and we use a pooling function to obtain the collective embedding of the graph: 
\begin{equation}\label{eq:subg}
    f_s(G)=R^{(P)}(\{g(G(\bm{v}))|\forall \bm{v} \in V^k(G)\}),\quad P\in\{G, N\}.
\end{equation}
$f_{s}(G)$ can be a vector of graph representation if $R^{(P)}$ is a graph-level pooling function ($P$ equals $G$) or a matrix of node representations if $R^{(P)}$ is node-level ($P$ equals $N$). This essentially implements \(k\)-dimensional ordered subgraph GNN (\(k\)-OSAN) defined in \cite{osan} and captures most of the popular subgraph GNNs, including NGNN\cite{ngnn} and ID-GNN\cite{idgnn}. Furthermore, the expressivity of subgraph GNNs increases with larger values of $k$. Since node tuples, node-marked graphs, and subgraphs refer to the same thing, we use these terms interchangeably.

\textbf{The Weisfeiler-Lehman hierarchy (WL hierarchy).} The $k$-dimensional Weisfeiler-Lehman algorithm is vital in graph isomorphism testing. Earlier work established the hierarchy where the $(k+1)$-WL test is more expressive than the $k$-WL test~\cite{lowerbound}. \citet{gin} and \citet{wlneural} connected GNN expressivity to the WL test and proved that MPNN is bounded by the 1-WL test. Later work discovered that all node-rooted subgraphs are bounded by 3-WL expressivity~\cite{sun}, which cannot identify strongly regular graphs and structures like 4-cycles. \citet{osan} introduced $k$-dimensional ordered-subgraph WL (\(k\)-OSWL) hierarchy which is comparable to the $k$-WL test.

\textbf{Deep Q-learning (DQN).} DQN~\cite{expreplay} is a robust RL framework that uses a deep neural network to approximate the Q-values, representing the expected rewards for a specific action in a given state. Accumulating sufficient experience with the environment, DQN can make decisions in intricate state spaces. For a detailed introduction to DQN, please refer to Appendix~\ref{sec:rl}.
\section{Motivation}
\begin{wrapfigure}[12]{r}{0.35\textwidth}
    \begin{minipage}{\linewidth}
    \centering
      \includegraphics[width=0.98\linewidth, trim={0 0 1.5cm 1.8cm}, clip]{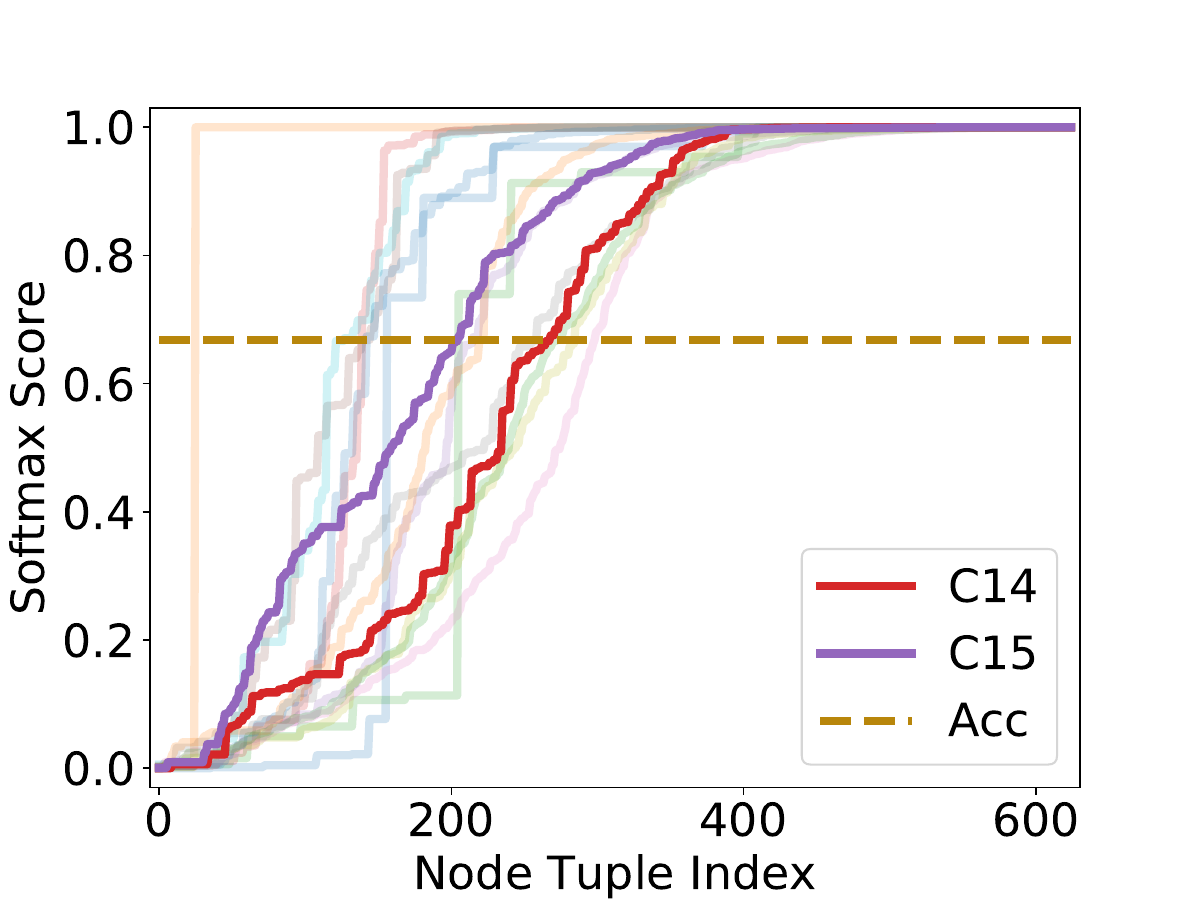}
        \caption{Sorted scores.}
    \label{fig:mot_figure}
    \end{minipage}%
\end{wrapfigure}
\label{sec:motivation}
Figure \ref{fig:intro_exp} shows that while an MPNN encodes rooted subtrees, a subgraph GNN encodes rooted subgraphs around each node. This allows subgraph GNN to differentiate more graphs at the cost of $O(|V|)$ MPNN runs. Meanwhile, subgraph GNNs are still bounded by 3-WL tests. Hence, we may need at least 2-node-tuple-rooted (e.g., edge-rooted) subgraph GNNs, requiring $O(|V^2|)$ MPNN runs to obtain better expressivity. In fact, the required complexity of subgraph GNNs to break beyond $k$-WL expressivity grows exponentially with $k$. The high computational cost of high expressivity models prevents them from being widely applied to real-world datasets. A natural question is, \emph{can we consider only a small subset of all the subgraphs to obtain similar expressivity}, just like in Figure \ref{fig:intro_exp}, where one subgraph is as powerful as the collective information of all subgraphs? This leads us to the following experiment.

We focus on the SR25 dataset. It contains 15 different strongly-regular graphs with the same configuration, each of 25 nodes. The goal is to do multi-class classification to distinguish all pairs of graphs. Since node-rooted subgraph GNNs are upper-bounded by 3-WL and 3-WL cannot distinguish any strongly regular graphs with the same configuration, node-rooted subgraph GNNs will generate identical representations for the 15 graphs while performing 25 MPNN runs each. 2-node-tuple subgraph GNNs have expressivity beyond 3-WL and can distinguish any pair of graphs from the dataset, but it takes 625 MPNN runs.

To test if every subgraph is required, we train an MPNN on \textit{randomly sampled} 2-node-marked graphs to minimize the expected loss to label $y$ of the unmarked graph $G$,
\begin{equation}
    \min_{g_p} \mathbb{E}_{\bm{v}}[\mathcal{L}(\text{MLP}(f_r(G,\bm{v})),y)], \quad f_r(G, \bm{v})=R^{(G)}(g_{p}(G(\bm{v}))),\quad\bm{v}\in V^2(G),
\end{equation}
where $\mathcal{L}$ is the loss function, MLP is a multi-layer perceptron, $g_{p}$ is an MPNN, and $R^{(G)}$ pools the node representations to graph representations. Unlike 2-node-tuple-rooted subgraph GNNs that run the MPNN $|V^2|$ times for each graph, this model runs the MPNN exactly once. During testing, for each of the 15 graphs, we randomly sample one 2-node-marked graph for classification. We perform ten independent tests, and the average test accuracy is 66.8\%. Using 2-node-marked graphs, with only one GNN run, it already outperforms node-rooted subgraph GNNs that fail on this dataset. More interestingly, for each graph $G$, we can sort the classification score of its $|V^2|$ possible node-marked graphs and plot them in Figure \ref{fig:mot_figure} (C14 and C15 are the plots for 14-th and 15-th graphs in the dataset). Note that the horizontal axis is not the number of subgraphs; it is the index of subgraphs after sorting by their \textit{individual} classification scores. We see that each original graph has many marked graphs with a classification score close to one. That means even in one of the most difficult datasets, we still can find particular node-marked graphs that uniquely distinguish the original graph from others. Moreover, unlike the example in Figure \ref{fig:intro_exp} with only two types of subgraphs, these marked graphs fall into many different isomorphism groups, meaning that the same observation holds in more complex graphs and can be applied to a wide range of graph classes. We prove that such a phenomenon exists in most regular graphs, which cannot be distinguished by MPNNs (proof in Appendix \ref{sec:proof}).
\begin{restatable}[]{thm}{nooverlap}
\label{thm:nooverlap}
Let $G_1$ and $G_2$ be two graphs uniformly sampled from all $n$-node, $r$-regular graphs where $3\leq r<\sqrt{2\log n}$. Given an injective pooling function $R^{(G)}$ and an MPNN $g_{p}$ of 1-WL expressivity, with a probability of at least $1-o(1)$, there exists a node (tuple) $\bm{v}\in V(G)$ whose corresponding node marked graph's embedding, $f_r(G_1, \bm{v})$, is different from any node marked graph's embedding in $G_2$.
\end{restatable}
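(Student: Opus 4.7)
The plan is to reduce the statement to a concrete structural invariant of the marked vertex and then exploit the randomness of $r$-regular graphs to show such invariants are sufficiently varied to guarantee the desired separation. First, I would unpack what $f_r(G, v)$ encodes. Since $g_p$ has $1$-WL expressivity and $R^{(G)}$ is injective, $f_r(G, v)$ is determined by, and determines, the stable $1$-WL coloring of the marked graph $G(v)$. With enough layers $T$, this coloring captures the rooted unfolding tree at $v$ up to depth $T$, which in turn records the number of cycles of length up to $2T$ passing through $v$ via the folding pattern of the tree. Consequently, two marked vertices with distinct cycle signatures $\sigma(v) = (c_3(v), c_4(v), \dots)$ receive distinct embeddings under $f_r$.

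Next, I would invoke the classical theory of random regular graphs in the regime $3 \le r < \sqrt{2 \log n}$. Bollob\'as-type results give joint Poisson convergence for cycle counts: $X_k(G) \to \text{Poisson}((r-1)^k / (2k))$, independently across $k$, up to $k = O(\log n / \log(r-1))$. In the same regime, short cycles are, with high probability, vertex-disjoint, so the vertex-level cycle signature is highly localized around a marked vertex. Combining these observations, for independent $G_1$ and $G_2$, the joint distribution of the vertex cycle-signature multisets has enough entropy that, with probability $1 - o(1)$, the signature multiset of $G_1$ is not a subset of that of $G_2$. In particular, there exists a vertex $v \in V(G_1)$ whose signature is not realized at any vertex of $G_2$, whence $f_r(G_1, v) \ne f_r(G_2, u)$ for every $u \in V(G_2)$, which is the desired conclusion.

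The main obstacle is making the final step quantitative. Although cycle counts differ between independent random graphs with high probability, translating this into a \emph{vertex} whose signature is unmatched in $G_2$ requires more care than a global cycle-count comparison. A natural strategy is to isolate a moderately long cycle length $k^*$, show that the set of vertices of $G_1$ lying on $k^*$-cycles contributes a multiset of signatures that is independent in distribution from the analogous multiset in $G_2$, and then apply the Poisson approximation jointly to conclude non-containment with probability $1 - o(1)$. The upper bound $r < \sqrt{2 \log n}$ is exactly the sparsity threshold under which Bollob\'as's Poisson approximation for short-cycle counts remains valid and short cycles stay vertex-disjoint with high probability, ensuring both the local recoverability of $\sigma(v)$ by $f_r$ and the quantitative separation between $G_1$ and $G_2$.
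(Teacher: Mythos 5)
Your reduction to vertex-level cycle signatures has two breaks, one structural and one probabilistic. Structurally, the claim that $f_r(G,\bm{v})$ determines the cycle counts $c_k(v)$ for all $k\le 2T$ is false: a $1$-WL-bounded MPNN run on a node-marked graph can recover the number of $3$- and $4$-cycles through the marked vertex but provably cannot count $5$-cycles and longer at the node level --- this is exactly why node-rooted subgraph GNNs fail the $5$- and $6$-cycle columns of Table~\ref{tab:cycle_count} and why I$^2$-GNN exists. So ``distinct signatures imply distinct embeddings'' does not hold beyond $k=4$, and the invariant your whole argument rests on is not available to $f_r$.

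Probabilistically, even granting the signature, the non-containment step fails for the cycle lengths you could actually use. In the regime $3\le r<\sqrt{2\log n}$ the number of $k$-cycles for any fixed $k$ is asymptotically Poisson with \emph{constant} mean $(r-1)^k/(2k)$, so with probability bounded away from zero both graphs contain no short cycles at all, every vertex carries the trivial signature, and your separation event fails with probability $\Omega(1)$, not $o(1)$. Pushing $k^*$ up to $\Theta(\log n/\log(r-1))$ to inflate the Poisson mean destroys both the vertex-disjointness and the local recoverability you rely on, and in any case you need a per-vertex failure probability of $o(n^{-1})$ to survive the union bound over the $n$ candidate matches in $G_2$. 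The paper's proof sidesteps all of this by invoking the lemma from the NGNN paper: for two independent random $r$-regular graphs, the depth-$\lceil(\frac12+\epsilon)\frac{\log n}{\log(r-1-\epsilon)}\rceil$ rooted neighborhoods of any two fixed vertices are distinguished by a proper MPNN with probability $1-o(n^{-1})$. It then shows the marked-node architecture can simulate that rooted-subgraph extraction (the first $d$ layers do a BFS from the mark, the remaining layers restrict message passing to the resulting ball) and closes with a union bound over $V(G_2)$. To salvage your route you would essentially have to reprove that neighborhood-distinguishability lemma, at which point the cycle-signature detour buys nothing.
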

These observations show that by finding discriminative subgraphs effectively, we only need to apply MPNN to a much smaller subset of the large complete subgraph set to get a close level of expressivity.

\section{Magnetic graph neural network}
\label{sec:method}
We formulate the problem of finding the most discriminative subgraphs as a combinatorial optimization problem. Given a budget of $m$ as the number of subgraphs, $k$ as the order of node tuples, and $g_{p}$ an MPNN that embeds individual subgraphs, we minimize the following individual loss to graph $G$,
\begin{equation}
\begin{split}
    \min_{U=(\bm{v}_1,...,\bm{v}_m)\in (V^k(G))^m} &\mathcal{L}(\text{MLP}(f_p(G, U)), \bm{y})\\
    &f_p(G, U) = R^{(P)}(\{g_{p}(G(\bm{v}))|\forall \bm{v} \in U\}),\\
\end{split}
\end{equation}
Note that this formulation resembles that of subgraph GNNs in Equation~(\ref{eq:subg}); we are only substituting $V^k$ with $U$ to reduce the number of MPNN runs.
Witnessing the great success of deep RL in combinatorial optimization, we adopt Deep Q-learning (DQN) to our setting. We introduce each component of our DQN framework as follows.

\textbf{State Space:} For graph $G$, a state is $m=|U|$ node tuples, their corresponding node-marked graphs, and a $m$-by-$w$ matrix $W$ to record the state of $m$ node tuples. Our framework should generalize to arbitrary graphs. Hence, a state $s$ is defined as,
\begin{equation}
    s=(G, U, W)=(G, (\bm{v}_1,...,\bm{v}_m), W), s\in S=\mathcal{G}\times (\mathcal{V}^k)^m\times (\mathbb{R}^{m\times w})
\end{equation}
$S$ is the state space, $\mathcal{G}$ is the set of all graphs, and $\mathcal{V}^k$ is the set of all possible $k$ node tuples of $\mathcal{G}$. To generate the initial state during training, we sample one graph $G$ from the training set and randomly sample $m$ node tuples from $V^k(G)$. The state matrix $W$ is initialized to $\bm{0}$. The expressivity grows as $k$ grows. Generally, MAG-GNN with larger $k$ produces more unique graph embeddings, which is harder to train but might require smaller $m$ and fewer RL steps to represent the graph, leading to better inference time. However, for some datasets, such expressivity is excessive and poses great challenges to training. A smaller $k$ can reduce the sample space and stabilize training in this case.

\textbf{Action Space:} We define one RL agent action as selecting one index from one node tuple and replacing the node on that index with another node in the graph. This replaces the node-marked graph corresponding to the original node tuple with the one corresponding to the modified node tuple. Specifically, an action $a_{i,j,l}$ on state $s=(G, U, W)$ does the following on $U$:
\begin{equation}\label{eq:action}
    U'=a_{i,j,l}(U)=(\bm{v}_1...,\bm{v}_{i-1},\bm{v}_i',\bm{v}_{i+1},...\bm{v}_m),  \bm{v}_i'=([\bm{v}_i]_1,...,[\bm{v}_i]_{j-1}, v_l, [\bm{v}_i]_{j+1},...[\bm{v}_i]_k)
\end{equation}
The agent selects a target node tuple $\bm{v}_i$, whose $j$-th node is replaced with node $v_l\in V$. $W$ is then updated by an arbitrary state update function $W'=f_W(s, U')$ depending on the old state and new node tuples. 
The update function $f_W$ is not necessarily trainable (e.g., It can simply be a pooling function on embeddings of the marked nodes over states). 
The next state is $s'=(G, U', W')$. The action space is then $A=[m]\times[k]\times\mathcal{V}$. Actions only change the node tuple while keeping the graph structure, and the state matrix serves as a tracker of past states and actions. Unlike stochastic RL systems, our RL actions have deterministic outcomes.

The intuition behind the design of the action space is that it limits the number of actions for each node tuple to $O(m|V|k)$, which is linear in the number of nodes, and $k$ is usually small ($k=2$ is already more expressive than most subgraph GNNs). We can further reduce the action space to $O(|V|k)$ if the agent does not need to select the update target but uses a Q-network to do Equation~(\ref{eq:action}) on a given $\bm{v}_i$. In such a case, we either sequentially or simultaneously update all node tuples. Since the agent learns to be stationary when a node tuple should not be updated, we do not lose the power of our agent by the reduction. The overall action space design allows efficient computation of Q-values. We include a detailed discussion on the action space and alternative designs in Appendix \ref{sec:action}.

\textbf{Reward:} In Section \ref{sec:motivation}, we show that a proper node-marked graph significantly boosts the expressivity. Hence, an optimal reward choice is the increased expressivity from the action. However, expressivity is itself vaguely defined, and we can hardly quantify it. Instead, since the relevant improvement in expressivity should affect the objective value, we choose the objective value improvement as the instant reward. Specifically, let $s=(G, U, W)$ be the current state and let $s'=(G, U', W')=a(s)$ be the outcome state of action $a$, the reward $r$ is
\begin{equation}
    r(s,a,s')=\mathcal{L}(\text{MLP}(f_p(G,U)),\bm{y})-\mathcal{L}(\text{MLP}(f_p(G, U')),\bm{y})
\end{equation}
This reward correlates the action directly with the objective function, allowing our RL agent to be task-dependent and more flexible for different levels of tasks.

\begin{figure}[t]
    \centering
    \includegraphics[width=\textwidth, trim={0 0 0 2cm}]{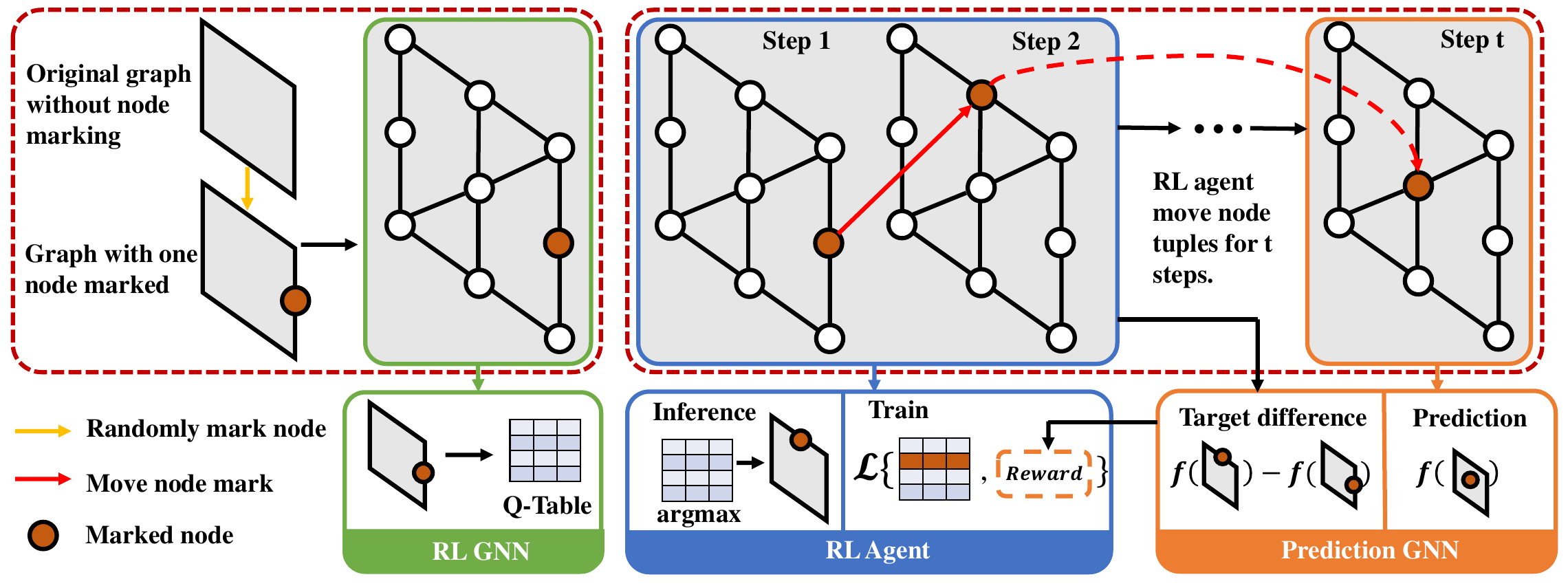}
    \caption{MAG-GNN's pipeline. An RL agent iteratively updates node tuples for better expressivity.}
    \label{fig:pipeline}
    \vspace{-10pt}
\end{figure}

\textbf{Q-Network:} Because our state includes graphs, we require an equivariant Q-network to output consistent Q-tables for actions. Hence, we choose MPNN to parameterize the Q-network. Specifically, for current state $s=(G, U, W)$ and the target node tuple $\bm{v}\in U$, we have the Q-table as,
\begin{equation}\label{eq:q_net}
    [Q(s,\bm{v})]_l=\text{MLP}([g_{rl}(G(\bm{v}))]_l\oplus \sum_{\bm{v}\in U} R^{(G)}(g_{rl}(G(\bm{v})))\oplus R^{(W)}(W))
\end{equation}
Row $l$ in the Q-table is computed by the embedding of node $v_l$ in the node-marked graph by an MPNN $g_{rl}$, the current overall graph representation across all node tuples, and the state matrix $W$ summarized by a pooling function $R^{(W)}$. $[Q]_{l,j}$ represents the expected reward of replacing the node on index $j$ of node tuple $\bm{v}$ with node $v_l$. The best action $a_{j,l}$ is then chosen by,
\begin{equation}
    \argmax_{j,l} [Q(s,\bm{v})]_{l,j}
\end{equation}
Note that because we assign different initial embeddings based on the node tuple, the MPNN distinguishes otherwise indistinguishable graphs.

As demonstrated in Figure \ref{fig:pipeline}, our agent starts with a set of random node tuples and their corresponding subgraphs. In each step, the agent uses an MPNN-parameterized Q-network to update one slot in one node tuple such that the new node tuple set results in a better objective value. The agent repeats for a fixed number of steps $t$ to find discriminative subgraphs. We do not assign a terminal state during training. Instead, the Q-Network will learn to be stationary when all other action decreases the objective value. This process is like throwing iron balls (marked nodes) into a magnetic field (geometry of the graph) and computing the force that the balls receive along with the interactions among balls (Q-network). We learn to move the balls and reason about the magnetic field's properties. Hence, we dub our method Magnetic GNN (MAG-GNN).
To show the effectiveness of our method in locating discriminative node tuples, we prove the following theorem (proof in Appendix \ref{sec:proof}).
\begin{restatable}[]{thm}{qgnnbetter}
\label{thm:qgnnbetter}
There is a MAG-GNN whose action is more powerful in identifying discriminative node tuples than random actions.
\end{restatable}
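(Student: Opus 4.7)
My plan is to read the claim existentially: it suffices to exhibit a single MAG-GNN parameterization, together with a witnessing graph instance, on which the learned policy strictly dominates the uniform-random action policy in the probability of reaching a discriminative node-tuple set. The first ingredient, which I would establish immediately, is a ``baseline'' observation: any MAG-GNN can emulate the random-action policy by setting the last linear layer of the Q-network MLP in Equation~(\ref{eq:q_net}) to zero, yielding a constant Q-table and hence $\argmax$ ties broken uniformly at random over the action space $A=[m]\times[k]\times\mathcal{V}$. Thus MAG-GNN is at least as powerful as random action, and only one instance of strict improvement is needed to finish.

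For the strict improvement, I would use the regular-graph instances supplied by Theorem~\ref{thm:nooverlap} together with the SR25 observation from Section~\ref{sec:motivation}. On such instances, only a strict (and, per Figure~\ref{fig:mot_figure}, often small) subset of $V^k(G)$ yields node tuples whose marked-graph embedding is discriminative against a paired graph, so uniform random action succeeds with probability strictly less than one. It then remains to construct a MAG-GNN that, on this witnessing instance, deterministically selects a discriminative tuple in a constant number of steps.

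The core step is to show that the MPNN-parameterized Q-network of Equation~(\ref{eq:q_net}) can realize a Q-function that orders candidate replacements consistently with their true instant rewards on the witnessing state. Because the Q-network sees a \emph{node-marked} graph, its per-node embeddings are strictly more expressive than plain 1-WL, and by a standard universal-approximation argument for MPNNs on finite collections of node-marked graphs (sufficient width and depth separates any finite set of non-isomorphic marked graphs), one can choose parameters $\theta$ such that $[Q_\theta(s,\bm{v})]_{l,j}$ strictly ranks the candidate replacements by their reward $r(s, a_{i,j,l}, s')$. The $\argmax$ therefore lands on a discriminative tuple with probability one on the witnessing instance, strictly beating the random baseline.

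The main obstacle will be the last step: pinning down carefully that the MPNN Q-network has enough capacity both to \emph{separate} the finitely many states encountered along the rollout and to have its output MLP produce a numerical ordering matching the true reward, rather than merely assigning distinct but uninformative Q-values. A related subtlety is the $t$-step horizon and the state-tracking matrix $W$: one could either argue that a single informed action already improves over random (bypassing $W$ by choosing $W\equiv 0$) or iterate the emulation-plus-improvement argument across all $t$ steps. Either way, composing the ``at-least-as-good'' emulation with the witnessed strict-improvement instance delivers the theorem.
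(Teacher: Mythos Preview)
Your proposal takes a genuinely different route from the paper. The paper's proof is explicit and constructive: it builds a parametrized family of ``super graphs'' by stringing $4n$ copies of two fixed $8$-node CSL graphs $A,B$ (skip factors $2$ and $3$) into a cycle, with full bipartite connections between adjacent copies. Positive instances satisfy $[S]_i=[S]_{i+2n}$ and negative ones satisfy $[S]_i\neq[S]_{i+2n}$; the task is to tell them apart. The paper then exhibits a concrete policy for $k=2,\ m=1$---``move one marked node to the node farthest from the other''---which an MPNN on the marked graph can implement because distances from a marked node are $1$-WL-computable. This places the two marks on diametrically opposite CSL copies in \emph{one} step, revealing the class. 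A uniform random action lands on opposite copies with probability $1/(4n-1)$, so random search needs $4n-1$ expected steps. The construction thus gives an unbounded, $n$-dependent separation, not merely a single witness.

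Your abstract approach (universal approximation on a fixed SR25 or Theorem~\ref{thm:nooverlap} witness) has a real gap at exactly the point you flag as the main obstacle. The Q-network assigns $[Q_\theta(s,\bm{v})]_{l,j}$ via the per-node embedding $[g_{rl}(G(\bm{v}))]_l$, so two candidate nodes $l_1,l_2$ with the same $1$-WL color in the \emph{currently} marked graph $G(\bm{v})$ must receive identical Q-values---even though substituting $l_1$ versus $l_2$ into the tuple can produce non-isomorphic future marked graphs with different rewards under $f_p$. Your sentence ``sufficient width and depth separates any finite set of non-isomorphic marked graphs'' is therefore too strong: an MPNN separates only what $1$-WL separates on that marked input, and ranking by true reward need not be a $1$-WL-invariant function of $l$ in $G(\bm{v})$. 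The paper sidesteps this entirely by choosing a task and a policy (distance to the other mark) that are \emph{manifestly} $1$-WL-computable on the marked graph and \emph{manifestly} sufficient for the constructed classification, rather than appealing to approximation. If you want to rescue your route, you would need to exhibit a witness instance where the optimal action is determined by a $1$-WL-invariant function of the candidate node in $G(\bm{v})$---which is essentially what the paper's construction provides.
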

MAG-GNN is at least as powerful as random actions since we can adopt a uniform-output MPNN for the MAG-GNN, yielding random actions. The superiority proof identifies cases where MAG-GNN requires fewer steps to locate the discriminative node tuples. The overall inference complexity is the number of MPNN runs, $O(mtT|V^2|)$. A more detailed complexity analysis is in Appendix \ref{sec:complexity}.

Some previous works also realize the complexity limitation of subgraph GNNs and propose sampling-based methods, and we discuss their relationship to MAG-GNN. PF-GNN \cite{pfgnn} uses particle-filtering to sample from canonical labeling tree. MAG-GNN and PF-GNN do not overlap exactly. However, we show that using the same resampling process, MAG-GNN captures PF-GNN (Appendix \ref{sec:proof}).
\begin{restatable}[]{thm}{pfgnn}
\label{thm:pfgnn}
MAG-GNN captures PF-GNN using the same resampling method in PF-GNN.
\end{restatable}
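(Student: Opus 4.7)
The plan is to exhibit an explicit instantiation of MAG-GNN that simulates PF-GNN step by step. Recall that PF-GNN maintains $m$ particles, each corresponding to a partial canonical labeling of $G$; in each round it (i) uses a learned MPNN-based scoring function to pick, for each particle, a new vertex to individualize, (ii) appends that vertex to the particle's labeling, and (iii) applies a weight-based resampling over the $m$ particles. I would set up a one-to-one correspondence between PF-GNN particles and MAG-GNN node tuples, so that the $j$-th coordinate of the tuple $\bm{v}_i \in V^k$ records the $j$-th vertex individualized in the $i$-th particle, and $k$ equals the number of PF-GNN rounds.

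First, I would mirror PF-GNN's individualization step by MAG-GNN's replace action in Equation~(\ref{eq:action}): at round $j$, the action $a_{i,j,l}$ writes vertex $v_l$ into slot $j$ of tuple $\bm{v}_i$, which is exactly the effect of appending $v_l$ to particle $i$'s labeling. Because MAG-GNN's Q-network (Equation~(\ref{eq:q_net})) is itself an MPNN applied to the marked graph $G(\bm{v}_i)$, and PF-GNN's scoring function is also an MPNN on the marked graph with the current partial labeling, I would invoke MPNN universality to choose the parameters of $g_{rl}$ so that $[Q(s,\bm{v}_i)]_{l,j}$ matches PF-GNN's score for vertex $v_l$ on the round-$j$ slot, while taking a dominating negative value on all other slots. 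Under this construction, the MAG-GNN policy picks precisely the vertex PF-GNN would pick. To reconcile MAG-GNN's deterministic $\argmax$ with PF-GNN's stochastic sampling, I would replace the argmax by a Boltzmann policy on the Q-table whose temperature and logits are matched to PF-GNN's sampling distribution; since the action space structure is identical, this matching is a direct parameter choice rather than a new mechanism.

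Second, because the theorem explicitly allows MAG-GNN to inherit PF-GNN's resampling method, after each round of individualization I would apply PF-GNN's weight computation and resampling directly to the $m$ tuples, producing the same post-resampling multiset of particles as PF-GNN. Iterating for $k$ rounds, the joint distribution over tuple-multisets in MAG-GNN coincides with the joint distribution over particle-multisets in PF-GNN. The final readout in MAG-GNN, namely $f_p(G,U) = R^{(P)}(\{g_p(G(\bm{v}))\mid \bm{v}\in U\})$, can then be instantiated with the same per-particle MPNN and pooling that PF-GNN uses, so that the output of MAG-GNN equals that of PF-GNN for every input graph $G$.

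The main obstacle will be the bookkeeping for the bridge between MAG-GNN's deterministic policy formalism and PF-GNN's inherently stochastic individualization. I expect the cleanest resolution to be that, once a softmax/Boltzmann policy over the Q-table is permitted, the MPNN-universality argument makes the per-round simulation immediate; all the genuinely stochastic content of PF-GNN is concentrated in the resampling step, which we are allowed to copy verbatim. A secondary subtlety is ensuring that the slot-masking trick (forcing actions to write into slot $j$ at round $j$) remains consistent with the Q-network's equivariance; this can be handled by using the state matrix $W$ as a round counter so that $g_{rl}$ can condition on which slot is active, without breaking permutation equivariance over the nodes of $G$.
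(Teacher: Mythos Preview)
Your high-level architecture---identify each PF-GNN particle with a node tuple, write the $j$-th individualized vertex into slot $j$, use the state matrix $W$ to mask already-filled slots, and import PF-GNN's resampling verbatim---is exactly the paper's construction. The paper additionally stores the particle weight as an extra column of $W$ and has $f_W$ update it, which your ``apply PF-GNN's weight computation directly'' covers implicitly.

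Where your proposal is loose is the step where you ``invoke MPNN universality'' to make $g_{rl}$ reproduce PF-GNN's scores. MPNNs are not universal (they are 1-WL bounded), so that appeal does not stand on its own; more to the point, PF-GNN's embedding $H_t^m$ is produced by an \emph{iterative} pipeline---individualize $v_1$, run the $T$-layer GNN $g$, individualize $v_2$, run $g$ again, and so on---whereas $g_{rl}$ is a single forward pass on $G(\bm v)$. The paper closes this gap with an explicit construction: a $(tT{+}t)$-layer MPNN on the fully marked graph that alternates $T$ layers copying $g$ with one layer that applies PF-GNN's individualization MLP to the next marked node (the ordering is read off $W$). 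The same layered trick, with $g$ replaced by the MPNN inside PF-GNN's policy $P(\cdot\mid H_t^m)$, yields $g_{rl}$. So the matching is by construction, not by a universality argument; you should replace the universality invocation with this unrolling.

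On the stochastic-versus-$\argmax$ discrepancy you flag: the paper does not introduce a Boltzmann policy; it argues only that the node representations, and hence the induced sampling distribution, coincide, and leaves the selection rule implicit. Your Boltzmann patch is a sensible way to make that step rigorous, but it is an addition beyond what the paper actually writes.
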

k-OSAN \cite{osan} proposes a data-driven subgraph sampling strategy to find informative subgraphs by another MPNN. This strategy reduces to random sampling when the graph requires higher expressivity (e.g., regular graphs) and has no features because the MPNN will generate the same embedding for all nodes and hence cannot identify subgraphs that most benefit the prediction like MAG-GNN can. MAG-GNN does not solely depend on the data and finds more expressive subgraphs even without node features. Moreover, sampled subgraphs in previous methods are essentially independent. In contrast, MAG-GNN also models their correlation using RL. This allows MAG-GNN to obtain better expressivity with fewer samples and better consistency (More discussions in Appendix \ref{sec:sample}).
\subsection{Training MAG-GNN}\label{sec:train}
With the state and action space, reward, and Q-network defined, we can use any DQN techniques to train the RL agent. However, to evaluate the framework's capability, we select the basic Experience Replay method \cite{expreplay} to train the Q-network. MAG-GNN essentially consists of two systems, an RL agent and a prediction MPNN. Making them coordinate is more critical to the method.
The most natural way to train our system is first to train $g_{p}$, as introduced in Section~\ref{sec:motivation} with random node tuples. We then use $g_p$ as part of $f_{p}$, the marked-graphs encoder, and treat $f_p$ as the fixed environment to train our Q-network. The advantage of the paradigm is that the environment is stable. Consequently, all experiences stored in the memory have the correct reward value for the action. This encourages stability and fast convergence during RL training. We term this paradigm ORD for ordered training.

However, not all $g_{p}$ trained from random node tuples are \textit{good}. When we train $g_{p}$ for the ZINC dataset and evaluate all node-marked graphs to as in Section \ref{sec:motivation}. The average standard deviation among all scores of node-marked graphs is \textasciitilde$0.003$, and the average difference between the worst and best score is \textasciitilde$0.007$. Hence, the maximal improvement is minimal if we use this MPNN as the environment. Intuitively, when the graph has informative initial features, $X$, like those in the ZINC dataset, the MPNN quickly recognizes patterns from these features while gradually learning to ignore node marking features $X(\bm{v}_i)$, as not all node markings carry helpful information. In such cases, we need to adjust $g_{p}$ while the RL agents become more powerful in finding discriminative subgraphs.

One way is to train the RL agent and $g_{p}$ simultaneously. Concretely, we sample a state $s$, run the RL agent $t$ steps to update it to state $s^t$, and train $g_{p}$ on the marked graphs of $s^t$. Then, in the same step, the RL agent samples a different state and treats the current $g_{p}$ as the environment to generate experience. Lastly, the RL agent is optimized on the sampled previous experiences. Because we adjust $g_{p}$ to capture node tuple information better, the score standard deviation of the node-marked ZINC graphs is kept at \textasciitilde$0.036$. We term this paradigm SIMUL. Compared to ORD, SIMUL makes the RL agent more difficult to train when $g_{p}$ evolves rapidly. Nevertheless, we observe that as $g_p$ gradually becomes stable, the RL agent can still learn effective policies.

One of the critical goals of MAG-GNN is to identify complex structural information that MPNN cannot. Hence, instead of training the agent on real-world datasets from scratch, we can transfer the knowledge from synthetic expressivity data to real-world data. As mentioned above, training MAG-GNN on graphs with features is difficult. Alternatively, we first use the ORD paradigm to train the agent on expressivity data without node features such as SR25. On these datasets, $g_{p}$ relies on the node markings to make correct predictions. We then fix the RL agent and only use the output state from the agent to train a new $g_{p}$ for the actual tasks, such as ZINC graph regression. Using this paradigm, we only need to train one MAG-GNN with good expressivity and adapt it to different tasks without worrying about overfitting and the aforementioned stability issue, we name this paradigm PRE. We experimented with different schemes in Section \ref{sec:exp}.
\section{Related work}
\textbf{More expressive GNNs.} A substantial amount of work strive to improve the expressivity of GNNs. They can be classified into the following categories: (1) Just like MPNN simulates the 1-WL test, \textbf{Higher-order GNNs} design GNNs to simulate higher-order WL tests. They include k-GNN~\cite{kgnn}, RingGNN~\cite{ringgnn}, and PPGN~\cite{ppgn}. These methods perform message passing on node tuples and have complexity that grows exponentially with $k$ and hence does not scale well to large graphs. (2) Realizing the symmetry-breaking power of subgraphs, \textbf{Subgraph GNNs}, including NGNN~\cite{ngnn}, GNN-AK~\cite{akgin}, KPGNN~\cite{kpgnn}, and ID-GNN~\cite{idgnn}, use MPNNs to encode subgraphs instead of subtrees around graph nodes. Later works, like I$^2$-GNN \cite{i2gnn}, further use 2-order(edge) subgraph information to improve the expressivity of subgraph GNNs beyond 3-WL. Recent works, such as OSAN \cite{osan} and SUN \cite{sun}, unify subgraph GNNs into the WL hierarchy, showing improvement in expressivity for subgraph GNNs also requires exponentially more subgraphs. (3) \textbf{Substrcuture counting} methods, including GSN \cite{GSN} and MotifNet \cite{motifnet}, employ substructure counting in GNNs. They count predefined substructures undetectable by the 1-WL test as features for MPNN to break its expressivity limit. However, the design of the relevant substructures usually requires human expertise. (4) Many previous works also realize the complexity issue of more expressive GNNs and strive to reduce it. SetGNN~\cite{ppegnn} proposes to reduce node tuple to set and thus reduce the number of nodes in message passing. GDGNN \cite{gdgnn} designs geodesic pooling functions to have strong expressivity without running MPNN multiple times. (5) \textbf{Non-equivariant GNNs.} \citet{rni} proves the universality of MPNN with randomly initialized node features, but due to the large search space, such expressivity is hardly achieved. DropGNN \cite{dropgnn} randomly drops out nodes from the graph to break symmetries in graphs. PF-GNN \cite{pfgnn} implements a neural version of canonical labeling and uses particle filtering to sample branches in the labeling process. OSAN \cite{osan} proposes to use input features to select important subgraphs. Agent-based GNN \cite{abgnn} initializes multiple agents on a graph without the message-passing paradigm to iteratively update node embeddings. MAG-GNN also falls into this category.

\textbf{Reinforcement Learning and GNN.} There has been a considerable effort to combine RL and GNN. Most work is on application. GraphNAS \cite{graphnas} uses GNN to encode neural architecture and use reinforcement learning to search for the best network. \citet{rlcircuit} uses GNN to model circuits and RL to adjust the transistor parameters. On graph learning, most works use RL to optimize particular parameters in GNN. For example, SUGAR \cite{sugar} uses Q-learning to learn the best top-k subgraphs for aggregations; Policy-GNN \cite{policygnn} learns to pick the best number of hops to aggregate node features. GPA \cite{gpa} uses Deep Q-learning to locate the valuable nodes in active search. These works leverage the node feature to improve the empirical performance but fail to identify graphs with symmetries, while MAG-GNN has good expressivity without node features. To the best of our knowledge, our work is the first to apply reinforcement learning to the graph expressivity problem.
\section{Experimental results}\label{sec:exp}
In the experiment \footnote{The code can be found at \href{https://github.com/LechengKong/MAG-GNN}{https://github.com/LechengKong/MAG-GNN}}, we answer the following questions:
\textbf{Q1}: Does MAG-GNN have good expressivity, and is the RL agent output more expressive than random ones?
\textbf{Q2}: MAG-GNN is graph-level; can the expressivity generalize to node-level tasks?
\textbf{Q3}: How is the RL method's performance on real-world datasets?
\textbf{Q4}: Does MAG-GNN have the claimed computational advantages?

For the experiment, we update all node tuples simultaneously as it allows more parallelizable computation. More experiment and dataset details can be found in Appendix \ref{sec:expdetail}.
\begin{table}[t]
\centering
    \begin{minipage}{0.34\linewidth}
    \centering
      \caption{Synthetic results. ($\uparrow$)}
      \resizebox{.98\textwidth}{!}{%
    \begin{tabular}{@{}lccc}
\toprule
 & EXP & CSL & SR25 \\ \midrule
GIN~\cite{gin} & 50.0 & 10.0 & 6.7\\
RNI~\cite{rni} & 99.7 & 16.0 & 6.7\\
NGNN~\cite{ngnn} & 100 & 100 & 6.7\\
GNNAK+~\cite{akgin} & 100 & 100 & 6.7\\
SSWL+~\cite{sswl} & 100 & 100 & 6.7\\
RNM & 100 & 100 & 93.8\\
I$^2$GNN~\cite{i2gnn} & 100 & 100& 100\\
MAG-GNN & 100 & 100 & 100 \\ \bottomrule
\end{tabular}
    \label{tab:synthetic}}%
    \end{minipage}%
    \begin{minipage}{0.655\linewidth}
    \centering
    \caption{Cycle counting results. ($\downarrow$)}
    \resizebox{.98\textwidth}{!}{%
    \begin{tabular}{@{}lcccc}
\toprule
Method & 3-CYCLES & 4-CYCLES & 5-CYCLES & 6-CYCLES \\ \midrule
GIN~\cite{gin} & 0.3515 & 0.2742 & 0.2088 & 0.1555 \\
RNM & \cellcolor{LightYellow}0.0041& 0.0129& 0.0351&0.0327\\
ID-GNN~\cite{idgnn} & \cellcolor{LightYellow}0.0006 & \cellcolor{LightYellow}0.0022 & 0.0490 & 0.0495 \\
NGNN~\cite{ngnn} & \cellcolor{LightYellow}0.0003 & \cellcolor{LightYellow}0.0013 & 0.0402 & 0.0439 \\
GNNAK+~\cite{akgin} & \cellcolor{LightYellow}0.0004 & \cellcolor{LightYellow}0.0041 & 0.0133 & 0.0238 \\
I$^2$GNN~\cite{i2gnn} & \cellcolor{LightYellow}0.0003 & \cellcolor{LightYellow}0.0016 & \cellcolor{LightYellow}0.0028 & \cellcolor{LightYellow}0.0082 \\ \midrule
MAG-GNN & \cellcolor{LightYellow}0.0029 & \cellcolor{LightYellow}0.0037 & \cellcolor{LightYellow}0.0097 & 0.0286 \\ \bottomrule
\end{tabular}
    \label{tab:cycle_count}}%
    \end{minipage}
    \vspace{-15pt}
\end{table}
\subsection{Discriminative power}
\begin{wraptable}[27]{r}{.35\linewidth}
\vspace{-8pt}
\centering
    \begin{minipage}{\linewidth}
    \centering
      \includegraphics[width=0.98\linewidth,  trim={0.1cm 8cm 21cm 0.2cm}, clip]{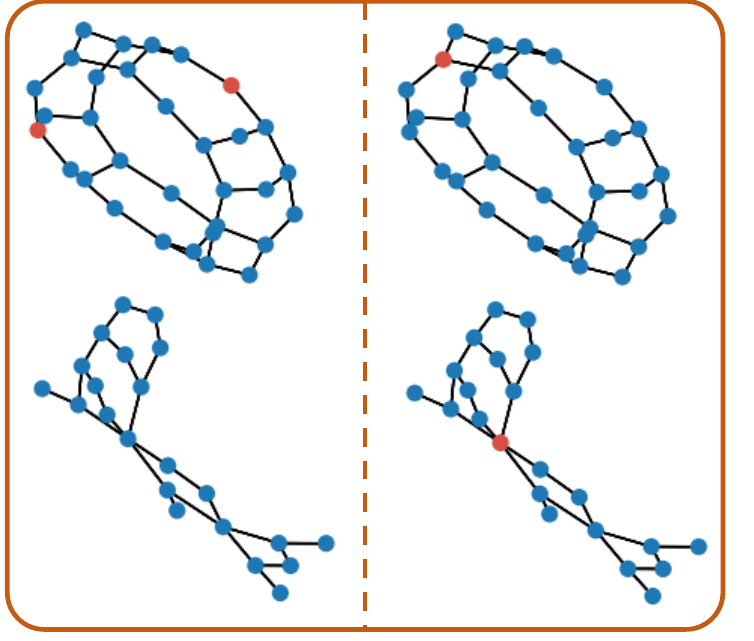}
        \captionof{figure}{Initial Random Node Marking (Left). MAG-GNN generated markings. (Right)}\label{fig:exp_exp}
    \end{minipage}%

    \begin{minipage}{\linewidth}
    \centering
      \includegraphics[width=0.98\linewidth,  trim={0.1cm 0 1.5cm 1.5cm}, clip]{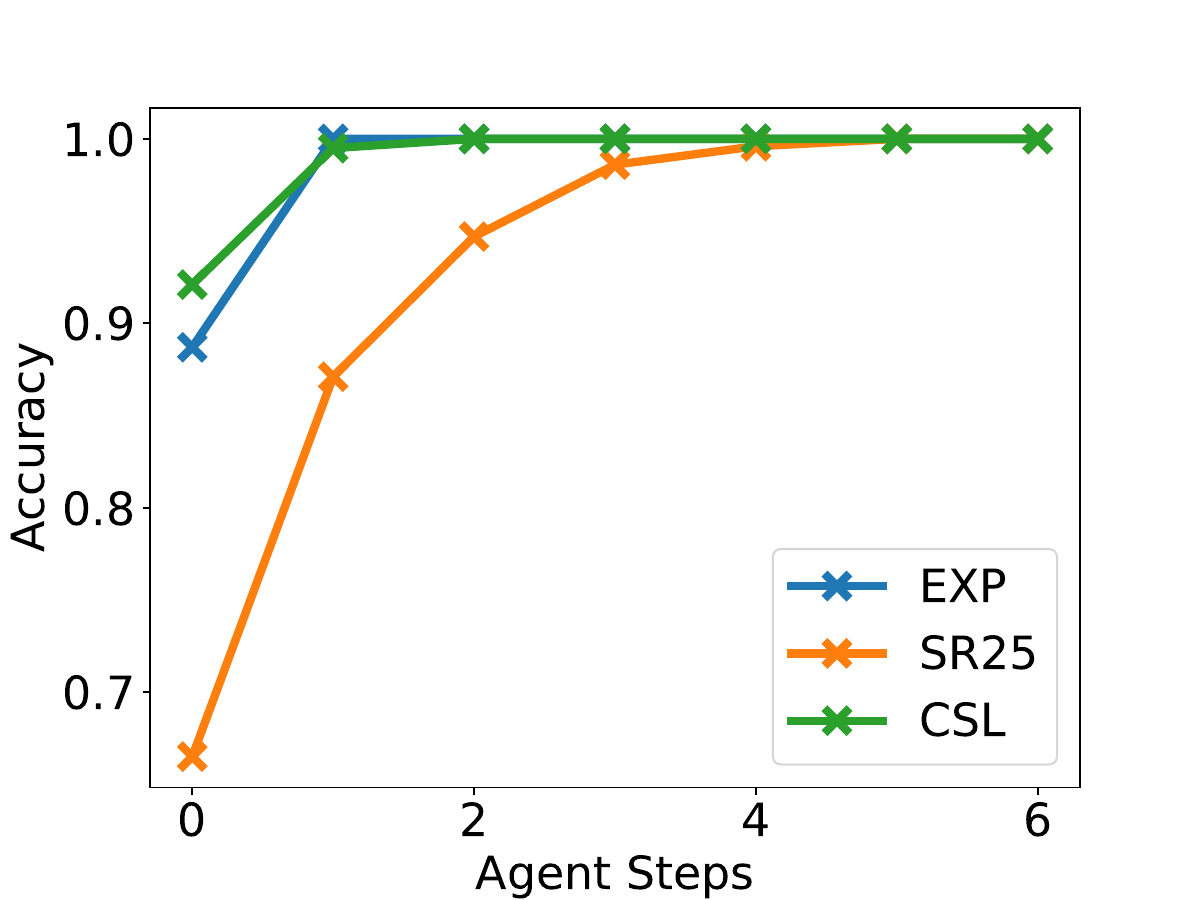}
        \captionof{figure}{Performance versus the number of RL steps.}\label{fig:syn_step}
    \end{minipage}%
\end{wraptable}
\textbf{Dataset.}To answer \textbf{Q1}, we use synthetic datasets (Accuracy) to test the expressivity of models. (1) EXP contains 600 pairs of non-isomorphic graphs that cannot be distinguished by 1-WL/2-WL bounded GNN. The task is to differentiate all pairs. (2) SR25 contains 15 strongly regular graphs with the same configuration and cannot be distinguished by 3-WL bounded GNN. (3) CSL contains 150 circular skip link graphs in 10 isomorphism classes. (4) BREC contains 400 pairs of synthetic graphs to test the fine-grained expressivity of GNNs (Appendix \ref{sec:moreexp}). We use the ORD training scheme.

\textbf{Results.} We compare to MPNN \cite{gin}, Random Node Marked (RNM) GNN with the same hyperparameter search space as MAG-GNN, subgraph GNNs\cite{ngnn,i2gnn, sswl, akgin}, and Non-equivariant GNNs \cite{rni} as baselines.
Table \ref{tab:synthetic} shows that MAG-GNN achieved a perfect score on all datasets, which verifies our observation and theory. 
Note that subgraph GNNs like NGNN take at least $|V|$ MPNN runs, while MAG-GNN takes constant times of MPNN runs. 
However, MAG-GNN successfully distinguished all strongly regular graphs in SR25, which NGNN failed. RNI, despite being universal, is challenging to train and can only make random guesses on SR25. Compared to the RNM approach, MAG-GNN finds more representative subgraphs for the SR25 dataset and performs better. Figure \ref{fig:exp_exp} shows a graph in the EXP dataset. We observe that MAG-GNN moves the initial node marking on the same component to different components, allowing better structure learning. Another example of strongly regular graphs is in Appendix \ref{sec:moreexp}. In Figure \ref{fig:syn_step}, we also plot the performance of MAG-GNN against the number of MAG-GNN search steps when we only use one node tuple of size two. We see that node tuples from MAG-GNN are significantly more expressive than random node tuples (step 0). On EXP and CSL datasets, MAG-GNN can achieve perfect scores in one or two steps, whereas in SR25, it takes six steps but with a consistent performance increase over steps. We plot the reward curve during training in Appendix \ref{sec:moreexp}.
\subsection{Node-level tasks.}
\textbf{Datasets.} To answer \textbf{Q2}, we adopt the synthetic cycle counting dataset, CYCLE (MAE), in \citet{akgin}. The task is to count the number of (3,4,5,6)-cycles on each node. MPNN cannot count these structures. Node-rooted GNN can count (3,4)-cycles, while only models with expressivity beyond 3-WL can count all cycles. We use the ORD training scheme.

\textbf{Results.} Following \citet{i2gnn}, we say a model has the required counting power if the error is below $0.01$ and report the result in Table \ref{tab:cycle_count}. We compare to MPNN baseline~\cite{gin}, RNM GNN, node-level subgraph GNNs~\cite{idgnn,akgin,ngnn}, and edge-level subgraph GNNs~\cite{i2gnn}. We can see that MAG-GNN successfully counts (3,4,5)-cycles, which indicates that node-marking also helps non-marked nodes to count cycles. We also note that MAG-GNN does not count 6-cycles well, although we use $>2$-order node tuples. We suspect this is because MAG-GNN takes the average score improvement of nodes as the reward, which might not be the best reward for a node-level task. Even so, we can still observe MAG-GNN's performance improvement over NGNN, which shows that MAG-GNN with larger node tuples indeed increases expressivity. We leave the design of a more proper node-level reward to future works.
\subsection{Real-world datasets}
\begin{wraptable}[14]{r}{.35\linewidth}
\centering
\vspace{-7pt}
    \begin{minipage}{\linewidth}
    \centering
    \caption{Transfer learning.}
    \resizebox{.98\textwidth}{!}{%
      \setlength{\tabcolsep}{2pt}
\begin{tabular}{@{}lccc}
\toprule
 & ZINC$\downarrow$ & ZINC-FULL$\downarrow$ & MOLHIV$\uparrow$ \\ \midrule
No PRE & 0.106 & 0.030 &  77.12\\
3-CYCLES & \cellcolor{LightGreen}0.104 & 0.030 &  76.98\\
6-CYCLES & \cellcolor{LightGreen}0.096 & \cellcolor{LightGreen}0.023 &  \cellcolor{LightGreen}78.30\\
SR25 & \cellcolor{LightGreen}0.103 & \cellcolor{LightGreen}0.028 &  \cellcolor{LightGreen}77.26\\ \bottomrule
\end{tabular}
      \label{tab:transfer}}%
    \end{minipage}%
    \\
    \vspace{4pt}
    \begin{minipage}{\linewidth}
    \centering
    \caption{Inference time.}
    \resizebox{.98\textwidth}{!}{%
      \setlength{\tabcolsep}{2pt}
\begin{tabular}{@{}lccc}
\toprule
Time (ms) & ZINC-FULL & CYCLE & QM \\ \midrule
MPNN & 100.1 & 58.4 & 222.9\\
NGNN & 402.9 & 211.7& 776.8 \\
I$^2$GNN & 1864.1 & 1170.4& 3524.0 \\
PPGN & 2097.3 & 1196.8& 4108.7 \\ \midrule
MAG-GNN & 385.8 & 155.1& 704.9 \\ \bottomrule
\end{tabular}
      \label{tab:time}}%
    \end{minipage}%
\end{wraptable}
\textbf{Datasets.} To answer \textbf{Q3}, we adopt the following real-world molecular property prediction datasets: (1) QM9 (MAE) contains 130k molecules for twelve graph regression targets. (2) ZINC and ZINC-FULL (MAE) include 12k and 250k chemical compounds for graph regression. (3) OGBG-MOLHIV (AUROC) contains 41k molecules, and the goal is to classify whether a molecule inhibits HIV. We use the SIMUL training scheme for a fair comparison to other methods.
\begin{table}[t]
    \centering
    \begin{minipage}{\linewidth}
    \centering
    \caption{QM9 experimental results on all targets. ($\downarrow$)}
    \resizebox{.98\textwidth}{!}{%
      \begin{tabular}{l|ccccc|c}
\toprule
Target & RNM & 1-2-3-GNN~\cite{wlneural} & PPGN~\cite{ppgn} & NGNN~\cite{ngnn} & I$^2$-GNN~\cite{i2gnn} & MAG-GNN \\ \midrule
Comp. & $O(kT|V^2|)$  & $O(T|V^4|)$ & $O(T|V^3|)$& $O(T|V^3|)$& $O(T|V^4|)$ & $O(mtT|V^2|)$\\ \midrule
$\mu$ & 0.426 & 0.476 & \textbf{0.231} & 0.428 & 0.428 & 0.353 \\
$\alpha$ & 0.306 & 0.27 & 0.382 & 0.230 & 0.230 & \textbf{0.226} \\
$\epsilon_{\text{HOMO}}$ & 0.00258 & 0.00337 & 0.00276 & 0.00265 & 0.00261 &\textbf{0.00257} \\
$\epsilon_{\text{LUMO}}$ & 0.00269 & 0.00351 & 0.00287 & 0.00297 & 0.00267 &\textbf{0.00252} \\
$\Delta\epsilon$ & 0.0047 & 0.0048 & 0.00406 & 0.0038 & 0.0038 &\textbf{0.0035} \\
$\langle R^2\rangle$ & 20.9 & 22.9 & 16.07 & 20.5 & 18.64 &\textbf{15.44} \\
$ZPVE$ & 0.0002 & 0.00019 & 0.0064 & 0.0002 & \textbf{0.00014} &0.0002 \\
$U_0$ & 0.281 & \textbf{0.0427} & 0.234 & 0.295 & 0.211 &0.111 \\
$U$ & 0.193 & 0.111 & 0.234 & 0.361 & 0.206 &\textbf{0.105} \\
$H$ & 0.384 & \textbf{0.0419} & 0.229 & 0.305 & 0.269 &0.089 \\
$G$ & 0.250 & \textbf{0.0469} & 0.238 & 0.489 & 0.261 &0.116 \\
$C_v$ & 0.177 & 0.0944 & 0.184 & 0.174 & \textbf{0.073} &0.093 \\ \bottomrule
\end{tabular}
    \label{tab:qm}}%
    \end{minipage}%
\end{table}

\textbf{Results.} On QM9 targets, MAG-GNN significantly outperforms NGNN on all targets with an average of $33\%$ MAE reduction. It also outperforms I$^2$-GNN, with partially $>$3-WL expressivity, on most of the targets ($16\%$ average MAE reduction), showing that with much fewer MPNN runs, MAG-GNN can still achieve better performance. This is because despite using fewer subgraphs, we use node tuples with a size greater than two, which grants MAG-GNN the power to distinguish graphs that require higher expressivity. MAG-GNN also performs comparably to 1-2-3-GNN, which simulates 3-WL. We observe that 1-2-3-GNN performs well on the last five targets (which are global properties hard for subgraph GNNs) while outcompeted by MAG-GNN on the rest of the targets. We suspect that 1-2-3-GNN has a constant high expressivity, which might easily lead to overfitting during training. 
At the same time, MAG-GNN can automatically adjust the expressivity by subgraph selection, reducing the risk of overfitting. 

The results on other molecular datasets are shown in Table \ref{tab:mol}. We see that MAG-GNN outperforms base GNN by a large margin showing its better expressivity. Another important comparison is between MAG-GNN and other non-equivariant GNNs, including PF-GNN, k-OSAN, and RNM. We see that MAG-GNN achieves significantly better results on ZINC, where only models with high expressivity get good performance. This verifies that using an RL agent to capture the inter-subgraph relation is essential in finding expressive subgraphs. MAG-GNN does not perform as well on the OGBG-MOLHIV dataset. We observe that 1-WL bounded methods PNA also achieves good results on the datasets, meaning that the critical factor determining the performance on this dataset is likely the implementation of the base GNN but not the expressivity. MAG-GNN is highly adaptable to any base GNN, potentially improving MAG-GNN's performance further. We leave this to future work.

We use the PRE training scheme to conduct transfer learning tasks on ZINC, ZINC-FULL, and OGBG-MOLHIV datasets. We pre-train the expressivity datasets shown on the left column of Table \ref{tab:transfer} and train the attached MPNN head using the datasets on the top row. We see that pretraining consistently brings performance improvement to all datasets. Models pre-trained on CYCLE are generally better than the one pretrained on SR25, possibly due to the abundance of cycles in molecular graphs.
\begin{table}[t]
    \centering
    \caption{Molecular datasets results.($\downarrow$)}
    \begin{tabular}{@{}l|c|ccc@{}}
\toprule
  &\# Params& ZINC ($\downarrow$) & ZINC-FULL ($\downarrow$) & OGBG-MOLHIV ($\uparrow$) \\ \midrule
GIN &-& 0.163$\pm$0.004 & 0.088$\pm$0.002 & 77.07$\pm$1.49 \\
PNA &-& 0.188$\pm$0.004 & - & 79.05$\pm$1.32 \\
k-OSAN &-& 0.155$\pm$0.004 & - & - \\
PF-GNN &-& 0.122$\pm$0.010 & - & 80.15$\pm$0.68 \\
RNM &453k& 0.128$\pm$0.027 & 0.062$\pm$0.004  & 76.79$\pm$0.94 \\
GSN &-& 0.115$\pm$0.012 & - & 78.80$\pm$0.82 \\
CIN &\textasciitilde100k& 0.079$\pm$0.006 & \textbf{0.022}$\pm$0.002 & \textbf{80.94}$\pm$0.57 \\
NGNN &\textasciitilde500k& 0.111$\pm$0.003 & 0.029$\pm$0.001 & 78.34$\pm$1.86 \\
GNNAK+ &\textasciitilde500k& 0.080$\pm$0.001 & - & 79.61$\pm$1.19 \\
SUN &526k& 0.083$\pm$0.003 & - & 80.03$\pm$0.55 \\
KPGNN &489k& 0.093$\pm$0.007 & - & - \\
I$^2$GNN &-& 0.083$\pm$0.001 & \underline{0.023$\pm$0.001} & 78.68$\pm$0.93 \\ 
SSWL+ &387k& \textbf{0.070}$\pm$0.005 & \textbf{0.022}$\pm$0.002 & 79.58$\pm$0.35\\ \midrule
MAG-GNN &496k& 0.106$\pm$0.014 & 0.030$\pm$0.002 & 77.12$\pm$1.13 \\ 
MAG-GNN-PRE &496k& 0.096$\pm$0.009 & \underline{0.023$\pm$0.002} & 78.30$\pm$1.08 \\\bottomrule
\end{tabular}
    \label{tab:mol}
\end{table}
\subsection{Runtime comparison}
We conducted runtime analysis on previously mentioned datasets. Since it is difficult to match the number of parameters for all models strictly, we fixed the number of GNN layers to five and the embedding dimension to 100. We set a 1 GB memory budget for all models and measured their inference time on the test datasets. We use $m=2$ and $T=2$ for MAG-GNN. The results in Table~\ref{tab:time} show that MAG-GNN is more efficient than all subgraph GNNs and is significantly faster than the edge-rooted subgraph GNN, I$^2$GNN. NGNN achieves comparable efficiency to MAG-GNN because it takes a fixed-hop subgraph around nodes, reducing the subgraph size. However, MAG-GNN outperforms NGNN on most targets with its better expressivity.

\textbf{Limitations.} Despite the training schemes in Section \ref{sec:train}, MAG-GNN is harder to train. Also, MAG-GNN's design for node-level tasks might not be  proper. This motivates research on extending MAG-GNN to node-level or even edge-level tasks. We discuss this further in Appendix \ref{sec:limit}.
\section{Conclusions}
In this work, we closely examine one popular GNN paradigm, subgraph GNN, and discover that a small subset of subgraphs is sufficient for obtaining high expressivity. We then design MAG-GNN, which uses RL to locate such a subset, and propose different schemes to train the RL agent effectively. Experimental results show that MAG-GNN achieved very competitive performance to subgraph GNNs with significantly less inference time. This opens up new pathways to design efficient GNNs.

\textbf{Acknowledgement.} Lecheng Kong, Jiarui Feng, Hao Liu, and Yixin Chen are supported by NSF grant CBE-2225809. Muhan Zhang is partially supported by the National Natural Science Foundation of China (62276003) and Alibaba Innovative Research Program.

\bibliographystyle{named}
\bibliography{neurips_2023}

\begin{thebibliography}{}

\bibitem[\protect\citeauthoryear{Abboud \bgroup \em et al.\egroup }{2020}]{rni}
Ralph Abboud, Ismail~Ilkan Ceylan, Martin Grohe, and Thomas Lukasiewicz.
\newblock The surprising power of graph neural networks with random node
  initialization.
\newblock {\em arXiv preprint arXiv:2010.01179}, 2020.

\bibitem[\protect\citeauthoryear{Bongini \bgroup \em et al.\egroup
  }{2021}]{drug}
Pietro Bongini, Monica Bianchini, and Franco Scarselli.
\newblock Molecular generative graph neural networks for drug discovery.
\newblock {\em Neurocomputing}, 450:242--252, 2021.

\bibitem[\protect\citeauthoryear{Bouritsas \bgroup \em et al.\egroup
  }{2022}]{GSN}
Giorgos Bouritsas, Fabrizio Frasca, Stefanos~P Zafeiriou, and Michael
  Bronstein.
\newblock Improving graph neural network expressivity via subgraph isomorphism
  counting.
\newblock {\em IEEE Transactions on Pattern Analysis and Machine Intelligence},
  2022.

\bibitem[\protect\citeauthoryear{Cai \bgroup \em et al.\egroup
  }{1992}]{lowerbound}
Jin-Yi Cai, Martin F{\"u}rer, and Neil Immerman.
\newblock An optimal lower bound on the number of variables for graph
  identifications.
\newblock {\em Combinatorica}, 12(4):389--410, 1992.

\bibitem[\protect\citeauthoryear{Chen \bgroup \em et al.\egroup
  }{2019}]{ringgnn}
Zhengdao Chen, Soledad Villar, Lei Chen, and Joan Bruna.
\newblock On the equivalence between graph isomorphism testing and function
  approximation with gnns.
\newblock {\em Advances in neural information processing systems}, 32, 2019.

\bibitem[\protect\citeauthoryear{Chen \bgroup \em et al.\egroup
  }{2021}]{autodrive}
Sikai Chen, Jiqian Dong, Paul Ha, Yujie Li, and Samuel Labi.
\newblock Graph neural network and reinforcement learning for multi-agent
  cooperative control of connected autonomous vehicles.
\newblock {\em Computer-Aided Civil and Infrastructure Engineering},
  36(7):838--857, 2021.

\bibitem[\protect\citeauthoryear{Dupty \bgroup \em et al.\egroup
  }{2021}]{pfgnn}
Mohammed~Haroon Dupty, Yanfei Dong, and Wee~Sun Lee.
\newblock Pf-gnn: Differentiable particle filtering based approximation of
  universal graph representations.
\newblock In {\em International Conference on Learning Representations}, 2021.

\bibitem[\protect\citeauthoryear{Feng \bgroup \em et al.\egroup }{2022}]{kpgnn}
Jiarui Feng, Yixin Chen, Fuhai Li, Anindya Sarkar, and Muhan Zhang.
\newblock How powerful are k-hop message passing graph neural networks.
\newblock In Alice~H. Oh, Alekh Agarwal, Danielle Belgrave, and Kyunghyun Cho,
  editors, {\em Advances in Neural Information Processing Systems}, 2022.

\bibitem[\protect\citeauthoryear{Fey and Lenssen}{2019}]{pyg}
Matthias Fey and Jan~Eric Lenssen.
\newblock Fast graph representation learning with pytorch geometric.
\newblock {\em arXiv preprint arXiv:1903.02428}, 2019.

\bibitem[\protect\citeauthoryear{Frasca \bgroup \em et al.\egroup }{2022}]{sun}
Fabrizio Frasca, Beatrice Bevilacqua, Michael Bronstein, and Haggai Maron.
\newblock Understanding and extending subgraph gnns by rethinking their
  symmetries.
\newblock {\em Advances in Neural Information Processing Systems},
  35:31376--31390, 2022.

\bibitem[\protect\citeauthoryear{Gao \bgroup \em et al.\egroup
  }{2019}]{graphnas}
Yang Gao, Hong Yang, Peng Zhang, Chuan Zhou, and Yue Hu.
\newblock Graphnas: Graph neural architecture search with reinforcement
  learning.
\newblock {\em arXiv preprint arXiv:1904.09981}, 2019.

\bibitem[\protect\citeauthoryear{Gilmer \bgroup \em et al.\egroup
  }{2017}]{virtualn}
Justin Gilmer, Samuel~S Schoenholz, Patrick~F Riley, Oriol Vinyals, and
  George~E Dahl.
\newblock Neural message passing for quantum chemistry.
\newblock In {\em International conference on machine learning}, pages
  1263--1272. PMLR, 2017.

\bibitem[\protect\citeauthoryear{Hu \bgroup \em et al.\egroup }{2020a}]{gpa}
Shengding Hu, Zheng Xiong, Meng Qu, Xingdi Yuan, Marc-Alexandre C{\^o}t{\'e},
  Zhiyuan Liu, and Jian Tang.
\newblock Graph policy network for transferable active learning on graphs.
\newblock {\em Advances in Neural Information Processing Systems},
  33:10174--10185, 2020.

\bibitem[\protect\citeauthoryear{Hu \bgroup \em et al.\egroup }{2020b}]{ogb}
Weihua Hu, Matthias Fey, Marinka Zitnik, Yuxiao Dong, Hongyu Ren, Bowen Liu,
  Michele Catasta, and Jure Leskovec.
\newblock Open graph benchmark: Datasets for machine learning on graphs.
\newblock {\em Advances in neural information processing systems},
  33:22118--22133, 2020.

\bibitem[\protect\citeauthoryear{Huang \bgroup \em et al.\egroup
  }{2022}]{i2gnn}
Yinan Huang, Xingang Peng, Jianzhu Ma, and Muhan Zhang.
\newblock Boosting the cycle counting power of graph neural networks with
  $i^{2}$-gnns.
\newblock {\em arXiv preprint arXiv:2210.13978}, 2022.

\bibitem[\protect\citeauthoryear{Kong \bgroup \em et al.\egroup }{2022}]{gdgnn}
Lecheng Kong, Yixin Chen, and Muhan Zhang.
\newblock Geodesic graph neural network for efficient graph representation
  learning.
\newblock In Alice~H. Oh, Alekh Agarwal, Danielle Belgrave, and Kyunghyun Cho,
  editors, {\em Advances in Neural Information Processing Systems}, 2022.

\bibitem[\protect\citeauthoryear{Lai \bgroup \em et al.\egroup
  }{2020}]{policygnn}
Kwei-Herng Lai, Daochen Zha, Kaixiong Zhou, and Xia Hu.
\newblock Policy-gnn: Aggregation optimization for graph neural networks.
\newblock In {\em Proceedings of the 26th ACM SIGKDD International Conference
  on Knowledge Discovery \& Data Mining}, pages 461--471, 2020.

\bibitem[\protect\citeauthoryear{Li \bgroup \em et al.\egroup }{2019}]{kgnn}
Yujia Li, Chenjie Gu, Thomas Dullien, Oriol Vinyals, and Pushmeet Kohli.
\newblock Graph matching networks for learning the similarity of graph
  structured objects.
\newblock In {\em International conference on machine learning}, pages
  3835--3845. PMLR, 2019.

\bibitem[\protect\citeauthoryear{Lin}{1992}]{expreplay}
Long-Ji Lin.
\newblock {\em Reinforcement learning for robots using neural networks}.
\newblock Carnegie Mellon University, 1992.

\bibitem[\protect\citeauthoryear{Maron \bgroup \em et al.\egroup }{2019}]{ppgn}
Haggai Maron, Heli Ben-Hamu, Hadar Serviansky, and Yaron Lipman.
\newblock Provably powerful graph networks.
\newblock {\em Advances in neural information processing systems}, 32, 2019.

\bibitem[\protect\citeauthoryear{Martinkus \bgroup \em et al.\egroup
  }{2023}]{abgnn}
Karolis Martinkus, P{\'a}l~Andr{\'a}s Papp, Benedikt Schesch, and Roger
  Wattenhofer.
\newblock Agent-based graph neural networks.
\newblock In {\em The Eleventh International Conference on Learning
  Representations}, 2023.

\bibitem[\protect\citeauthoryear{Monti \bgroup \em et al.\egroup
  }{2018}]{motifnet}
Federico Monti, Karl Otness, and Michael~M Bronstein.
\newblock Motifnet: a motif-based graph convolutional network for directed
  graphs.
\newblock In {\em 2018 IEEE Data Science Workshop (DSW)}, pages 225--228. IEEE,
  2018.

\bibitem[\protect\citeauthoryear{Morris \bgroup \em et al.\egroup
  }{2019}]{wlneural}
Christopher Morris, Martin Ritzert, Matthias Fey, William~L Hamilton, Jan~Eric
  Lenssen, Gaurav Rattan, and Martin Grohe.
\newblock Weisfeiler and leman go neural: higher-order graph neural networks.
\newblock In {\em Proceedings of the Thirty-Third AAAI Conference on Artificial
  Intelligence and Thirty-First Innovative Applications of Artificial
  Intelligence Conference and Ninth AAAI Symposium on Educational Advances in
  Artificial Intelligence}, pages 4602--4609, 2019.

\bibitem[\protect\citeauthoryear{Papp \bgroup \em et al.\egroup
  }{2021}]{dropgnn}
P{\'a}l~Andr{\'a}s Papp, Karolis Martinkus, Lukas Faber, and Roger Wattenhofer.
\newblock Dropgnn: Random dropouts increase the expressiveness of graph neural
  networks.
\newblock {\em Advances in Neural Information Processing Systems},
  34:21997--22009, 2021.

\bibitem[\protect\citeauthoryear{Paszke \bgroup \em et al.\egroup
  }{2019}]{pytorch}
Adam Paszke, Sam Gross, Francisco Massa, Adam Lerer, James Bradbury, Gregory
  Chanan, Trevor Killeen, Zeming Lin, Natalia Gimelshein, Luca Antiga, et~al.
\newblock Pytorch: An imperative style, high-performance deep learning library.
\newblock {\em Advances in neural information processing systems}, 32, 2019.

\bibitem[\protect\citeauthoryear{Qian \bgroup \em et al.\egroup }{2022}]{osan}
Chendi Qian, Gaurav Rattan, Floris Geerts, Mathias Niepert, and Christopher
  Morris.
\newblock Ordered subgraph aggregation networks.
\newblock {\em Advances in Neural Information Processing Systems},
  35:21030--21045, 2022.

\bibitem[\protect\citeauthoryear{Sun \bgroup \em et al.\egroup }{2021}]{sugar}
Qingyun Sun, Jianxin Li, Hao Peng, Jia Wu, Yuanxing Ning, Philip~S Yu, and
  Lifang He.
\newblock Sugar: Subgraph neural network with reinforcement pooling and
  self-supervised mutual information mechanism.
\newblock In {\em Proceedings of the Web Conference 2021}, pages 2081--2091,
  2021.

\bibitem[\protect\citeauthoryear{Wang and Zhang}{2023}]{brec}
Yanbo Wang and Muhan Zhang.
\newblock Towards better evaluation of gnn expressiveness with brec dataset,
  2023.

\bibitem[\protect\citeauthoryear{Wang \bgroup \em et al.\egroup }{2019}]{dgl}
Minjie Wang, Da~Zheng, Zihao Ye, Quan Gan, Mufei Li, Xiang Song, Jinjing Zhou,
  Chao Ma, Lingfan Yu, Yu~Gai, Tianjun Xiao, Tong He, George Karypis, Jinyang
  Li, and Zheng Zhang.
\newblock Deep graph library: A graph-centric, highly-performant package for
  graph neural networks.
\newblock {\em arXiv preprint arXiv:1909.01315}, 2019.

\bibitem[\protect\citeauthoryear{Wang \bgroup \em et al.\egroup
  }{2020}]{rlcircuit}
Hanrui Wang, Kuan Wang, Jiacheng Yang, Linxiao Shen, Nan Sun, Hae-Seung Lee,
  and Song Han.
\newblock Gcn-rl circuit designer: Transferable transistor sizing with graph
  neural networks and reinforcement learning.
\newblock In {\em 2020 57th ACM/IEEE Design Automation Conference (DAC)}, pages
  1--6. IEEE, 2020.

\bibitem[\protect\citeauthoryear{Wu \bgroup \em et al.\egroup
  }{2020}]{recommend}
Shiwen Wu, Fei Sun, Wentao Zhang, Xu~Xie, and Bin Cui.
\newblock Graph neural networks in recommender systems: a survey.
\newblock {\em ACM Computing Surveys (CSUR)}, 2020.

\bibitem[\protect\citeauthoryear{Xu \bgroup \em et al.\egroup }{2018}]{gin}
Keyulu Xu, Weihua Hu, Jure Leskovec, and Stefanie Jegelka.
\newblock How powerful are graph neural networks?
\newblock In {\em International Conference on Learning Representations}, 2018.

\bibitem[\protect\citeauthoryear{You \bgroup \em et al.\egroup }{2021}]{idgnn}
Jiaxuan You, Jonathan Gomes{-}Selman, Rex Ying, and Jure Leskovec.
\newblock Identity-aware graph neural networks.
\newblock {\em CoRR}, abs/2101.10320, 2021.

\bibitem[\protect\citeauthoryear{Zhang and Chen}{2018}]{seal}
Muhan Zhang and Yixin Chen.
\newblock Link prediction based on graph neural networks.
\newblock {\em Advances in Neural Information Processing Systems},
  31:5165--5175, 2018.

\bibitem[\protect\citeauthoryear{Zhang and Li}{2021}]{ngnn}
Muhan Zhang and Pan Li.
\newblock Nested graph neural networks.
\newblock In A.~Beygelzimer, Y.~Dauphin, P.~Liang, and J.~Wortman Vaughan,
  editors, {\em Advances in Neural Information Processing Systems}, 2021.

\bibitem[\protect\citeauthoryear{Zhang \bgroup \em et al.\egroup }{2023}]{sswl}
Bohang Zhang, Guhao Feng, Yiheng Du, Di~He, and Liwei Wang.
\newblock A complete expressiveness hierarchy for subgraph gnns via subgraph
  weisfeiler-lehman tests, 2023.

\bibitem[\protect\citeauthoryear{Zhao \bgroup \em et al.\egroup }{2021}]{akgin}
Lingxiao Zhao, Wei Jin, Leman Akoglu, and Neil Shah.
\newblock From stars to subgraphs: Uplifting any gnn with local structure
  awareness.
\newblock {\em arXiv preprint arXiv:2110.03753}, 2021.

\bibitem[\protect\citeauthoryear{Zhao \bgroup \em et al.\egroup
  }{2022}]{ppegnn}
Lingxiao Zhao, Neil Shah, and Leman Akoglu.
\newblock A practical, progressively-expressive gnn.
\newblock {\em Advances in Neural Information Processing Systems},
  35:34106--34120, 2022.

\end{thebibliography}
\newpage
\appendix
\section*{Appendix}

\section{Proof of theorems}
\label{sec:proof}
\subsection{Proof of Theorem \ref{thm:nooverlap}}
To prove theorem \ref{thm:nooverlap}, we leverage the following lemma from \cite{ngnn}, and we restate it here.
\begin{restatable}[]{lem}{ngnn}
\label{lemma:ngnn}
For two graphs $G_1$ and $G_2$ that are uniformly independently sampled from all n-node r-regular graphs, where $3\leq r < \sqrt{2\log n}$, we pick any two nodes, each from one graph, denoted by $v_1$ and $v_2$ respectively, and do $\lceil(\frac{1}{2}+\epsilon)\frac{\log n}{\log(r-1-\epsilon)}\rceil$-height rooted subgraph extraction around $v_1$ and $v_2$. With at most $\epsilon\lceil\frac{\log n}{\log (r-1-\epsilon)}\rceil$ many layers, a proper message passing GNN will generate different representations for the extracted two subgraphs with probability at least $1-o(n^{-1})$.
\end{restatable}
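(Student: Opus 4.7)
The plan is to leverage the well-known fact that a random $r$-regular graph looks like an $(r-1)$-ary tree out to a radius of roughly $\tfrac{1}{2}\log_{r-1}n$, while beyond that radius ``birthday'' collisions begin to create non-tree edges. At the prescribed radius $h=\lceil(\tfrac{1}{2}+\epsilon)\log n/\log(r-1-\epsilon)\rceil$, the ball around any root contains at least $(r-1-\epsilon)^h\geq n^{1/2+\epsilon}$ vertices, which exceeds the $\sqrt n$ birthday threshold, so with high probability several cycle-closing edges must appear. These collisions are locally controlled but globally random, furnishing the statistical signature needed to separate the two subgraphs.

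First I would establish the structural skeleton of the neighborhoods via a configuration-model coupling. A BFS exploration from any root $v$ can be coupled with a Galton-Watson branching process of offspring $r-1$, so that the number $N_\ell(v)$ of vertices at depth $\ell$ concentrates sharply around $(r-1)^\ell$, while the non-tree edges appear at depths and positions that, conditional on the already-revealed tree, are close to uniform among the unused half-edges. A standard exposure/union-bound argument will then give, with probability $1-o(n^{-1})$, that the $h$-hop neighborhood is a tree plus $O(1)$ extra edges whose joint placement has polynomial-in-$n$ spread.

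Next I would argue MPNN distinguishability. Because 1-WL succeeds on forests and more generally on rooted graphs of bounded tree-width, a proper MPNN with injective AGGREGATE, COMBINE, and readout will map two such ``tree-plus-constant-edges'' rooted subgraphs to the same embedding only if they are isomorphic. Hence it suffices to show the two extracted $h$-hop neighborhoods are non-isomorphic with probability $1-o(n^{-1})$. The specified budget of $\epsilon\lceil\log n/\log(r-1-\epsilon)\rceil$ layers should suffice here because the color-refinement of such quasi-tree rooted graphs stabilises in a number of rounds governed by the depth of the deepest cycle-closing edge (together with the root label propagation), not by the full radius $h$. Finally, by independence of $G_1$ and $G_2$, the joint law of their invariant sequences factorises, and a direct counting argument bounds the coincidence probability by the supremum of the point probabilities of either invariant.

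The main obstacle will be this last step: proving a sufficiently strong anti-concentration bound on the combinatorial invariant. Plain concentration of $N_\ell$ is not enough, since concentrated random variables can still collide with probability far exceeding $n^{-1}$. I expect to need a local central limit theorem for the branching-process fluctuations, or a Berry-Esseen-type estimate for the cycle-closure count, together with careful handling of the conditioning imposed by the configuration model. The regime $r<\sqrt{2\log n}$ is exactly what keeps the fluctuations large enough, and the number of cycle-closing edges small enough to be enumerated, for the $o(n^{-1})$ bound to go through; matching both sides of this trade-off is where the tightness of the hypothesis is used.
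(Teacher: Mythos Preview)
The paper does not prove this lemma at all: it is explicitly imported from \cite{ngnn} (the NGNN paper) and merely restated as a black box to be used in the proof of Theorem~\ref{thm:nooverlap}. There is therefore no ``paper's own proof'' to compare against; your proposal is an attempt to reconstruct from scratch a result that the present authors take as given from prior work.

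As for the soundness of your sketch on its own terms, there is a genuine issue in the structural step. You assert that the $h$-hop ball is ``a tree plus $O(1)$ extra edges'' with probability $1-o(n^{-1})$. But at radius $h=\lceil(\tfrac12+\epsilon)\log n/\log(r-1-\epsilon)\rceil$ the ball contains on the order of $n^{1/2+\epsilon}$ vertices, and in the configuration model the expected number of cycle-closing (non-tree) edges among $m$ exposed vertices scales like $m^2/n$, which here is $n^{2\epsilon}$, not $O(1)$. So the neighborhoods are not quasi-trees in the sense you need; the ``bounded tree-width'' route to 1-WL completeness does not apply, and the reduction to rooted-subgraph isomorphism is not justified as stated. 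Relatedly, the claim that 1-WL succeeds on graphs of bounded tree-width is false in general (it distinguishes trees, but there are tree-width-$2$ graphs it cannot separate). Your instinct that the hard part is an anti-concentration estimate is right, but the combinatorial object whose distribution you need to analyze is richer than a constant number of random edge insertions; if you want to carry this through you should consult the original argument in \cite{ngnn} rather than rebuild it.
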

For completeness, we recall Theorem \ref{thm:nooverlap}:
\nooverlap*
\begin{proof}
Since the pooling function $R^{(G)}$ is injective and the GNN $g$ has 1-WL expressiveness, which is the maximal expressiveness for message-passing GNNs, $f_r$ satisfies the proper GNN requirement in Lemma \ref{lemma:ngnn}. We also let $g$ have $\epsilon\lceil\frac{\log n}{\log (r-1-\epsilon)}\rceil+d$ layers, where $d=\lceil(\frac{1}{2}+\epsilon)\frac{\log n}{\log(r-1-\epsilon)}\rceil$. Let $\bm{v}=(v_1)$ on $G_1$, since $v_1$ is marked, we can have the first $d$ layers of $g$ perform breadth-first-search on the graph rooted from $v_1$, and afterward, each node's representation embeds whether it's within $d$ distance from $v_1$. We then let the last $\epsilon\lceil\frac{\log n}{\log (r-1-\epsilon)}\rceil$ layers of $g$ only to perform message passing on the nodes that are within $d$ distance from $v_1$. This can be easily done by implementing an injective COMBINE function. We do the same operation for $\bm{v}'=(v_2)$ on $G_2$. According to lemma \ref{lemma:ngnn}, since $f_r$ is proper, the probability that $f_r(G_1,\bm{v})\neq f_r(G_1,\bm{v})$ is at least $1-o(n^{-1})$. Then, by union bound, the probability that $f_r(G_1, \bm{v})\notin \{f_r(G_2,\bm{v}')|\forall\bm{v}'\in V(G_2)\}$ is at least $1-o(1)$
\end{proof}
\subsection{Proof of Theorem \ref{thm:qgnnbetter}}
We first introduce two non-isomorphic graphs in Figure \ref{fig:csl}.
\begin{figure}[ht]
    \centering
    \includegraphics[width=9cm, trim={0 4cm 6cm 0},clip]{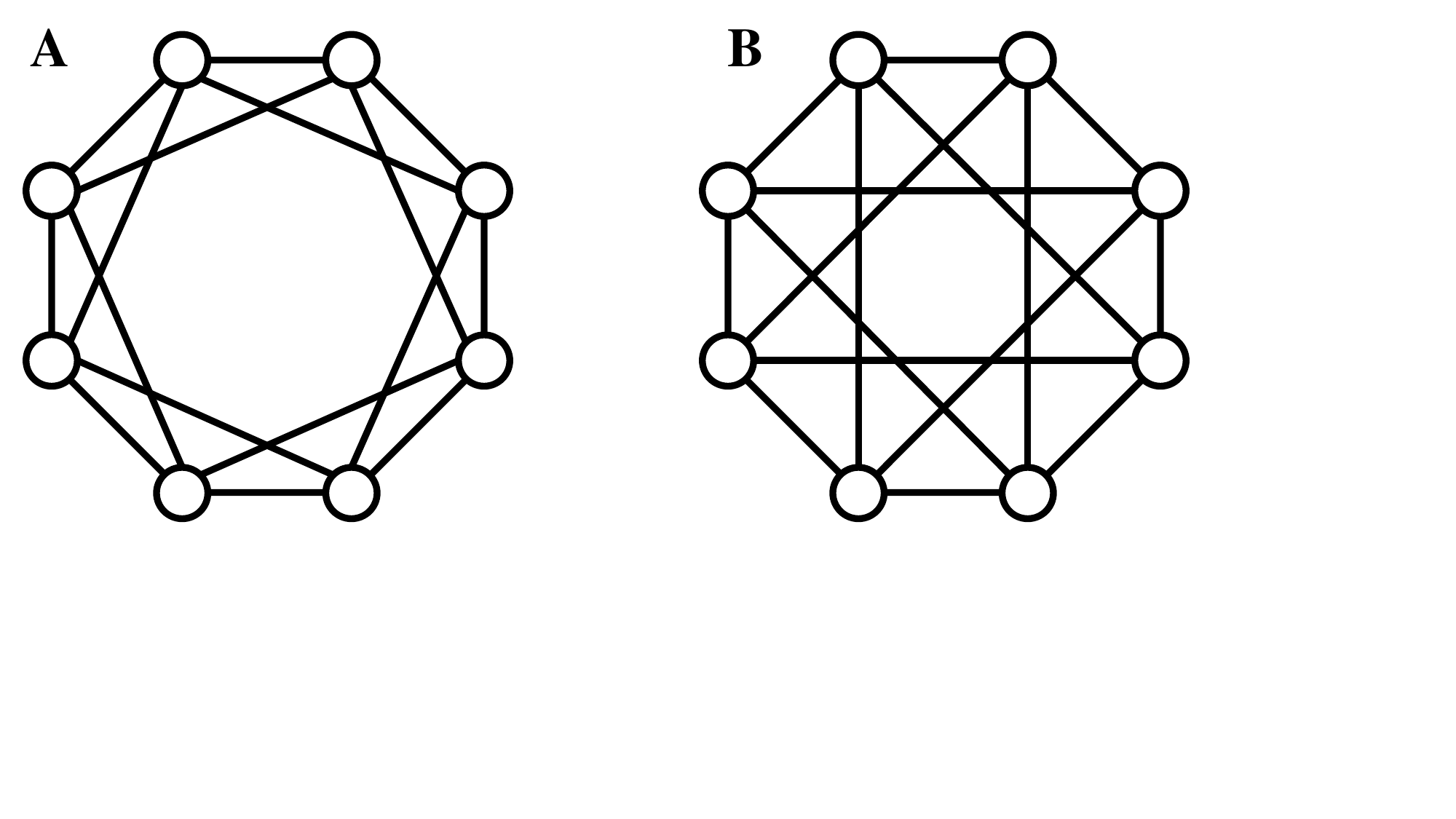}
    \caption{Two CSL graphs.}
    \label{fig:csl}
\end{figure}

They are CSL graphs both with 8 nodes, one with a skip factor of 2 and the other with a skip factor of 3. They have the following properties,
\begin{itemize}
    \item MPNN cannot differentiate them since they are regular graphs.
    \item They are vertex-transitive graphs. Meaning that all node-rooted subgraphs from one graph are the same.
    \item The rooted subgraphs from the two graphs are different and can be differentiated by an MPNN.
\end{itemize}
The properties imply that we cannot use an MPNN to differentiate the two graphs, but we can apply MPNN once on one subgraph from each graph to differentiate them since all subgraphs for one graph are the same.

We name the two CSL graphs A and B and use them to build a super graph. Let $d=2n$ be a super graph's diameter where $n\in \mathbb{Z}^{+}$. We randomly permute $n$ type A graphs  and $n$ type B graphs to form a graph sequence $S$. An example is $S_e=(A,B,B,A,A,B)$.

We then define $S'$ as the inverse of $S$, that is
\begin{equation}\label{eq:seq}
  [S']_i =
  \begin{cases}
    A & \text{if } [S]_i=B \\
    B & \text{if } [S]_i=A
  \end{cases}
\end{equation}
The inverse of $S_e$, $S_e'=(B,A,A,B,B,A)$.

We can then use concatenation to form two new sequences.
\begin{equation}
    S^+=S\oplus S \quad S^-=S\oplus S'
\end{equation}
For example, $S_e^+=(A,B,B,A,A,B,A,B,B,A,A,B)$ and $S_e^-=(A,B,B,A,A,B,B,A,A,B,B,A)$.
We then use full connections to connect the graphs in a sequence and create a super graph $G(S)$, specifically,
\begin{equation}
\begin{split}
    G(S) = \{V(G), E(G)\}, \quad &V(G)=\cup_i V([S]_i),\quad E(G)=\cup_i (E([S]_i) \cup W(i)))\\
    &W(i)=V([S]_i)\times V([S]_{i+1 \text{mod} 4n})
\end{split}
\end{equation}
The nodes in graph $S[i]$ connect to every node in graph $S[j]$ if the two graphs are adjacent in the sequence or are the head and tail of the sequence. Figure \ref{fig:supergraph} shows the example super graphs \(G(S_e^+)\) and \(G(S_e^-)\)
\begin{figure}[ht]
    \centering
    \includegraphics[width=11cm]{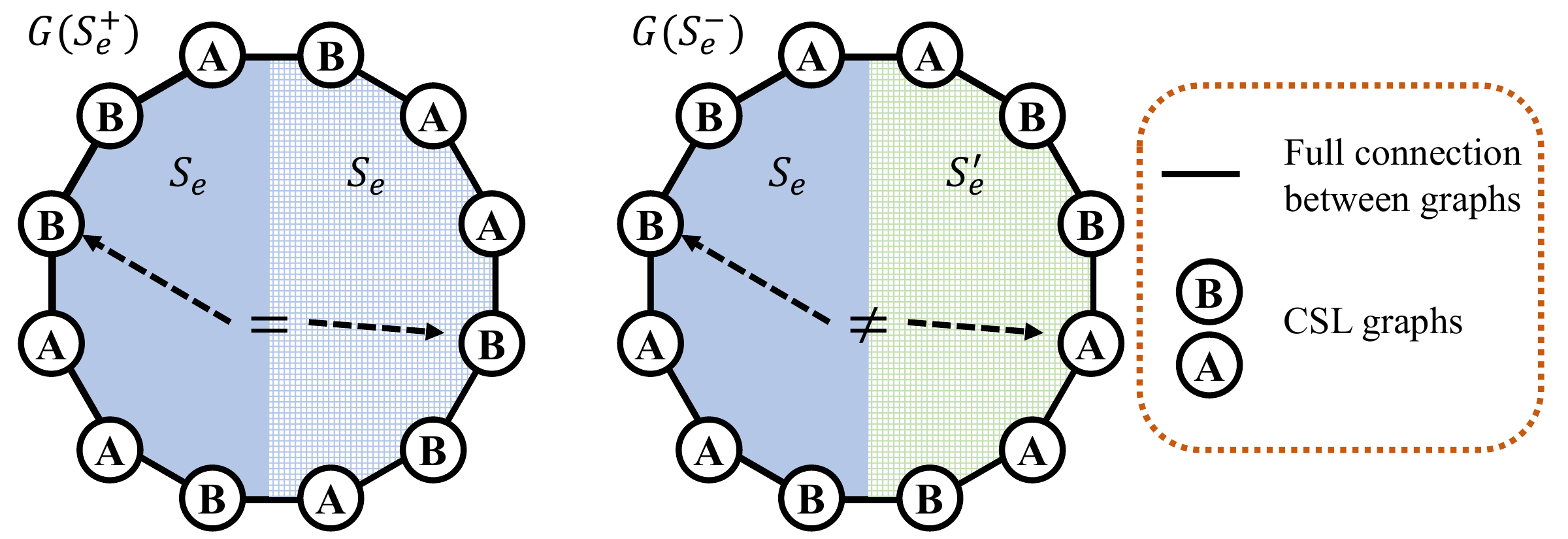}
    \caption{Two example super graphs.}
    \label{fig:supergraph}
\end{figure}

The properties of $G(S)$ is,
\begin{itemize}
    \item The diameter of the super graph is $2n$.
    \item $G(S)$ has an equal number of graphs A and B.
    \item Because adjacent graphs in the super graph are fully connected, adding node marking to graph $[S]_i$ in the super graph does not help us identify any other graphs $[S]_j$ without node marking. To identify $[S]_j$, we must add node marking to $[S]_j$.
    \item For positive super graphs, $[S]_i=[S]_{i+2n}$, that is, for any graph in the super graph, the graph on its direct opposite shares its type. Whereas for negative super graphs, $[S]_i\neq [S]_{i+2n}$.
\end{itemize}
Let the task be to discriminate between positive super graphs and negative super graphs. We then prove that MAG-GNN is superior in this case to show strictness.
\begin{proof}
    We consider the case where $k=2$ and $m=1$. That is, we use one node tuple of size two. Let MAG-GNN implement the following policy: \textit{move one node in the node tuple to the node in the graph that is farthest away from the other node.} This is easy to be implemented by MAG-GNN since both nodes are marked, and we can compute the distance from all nodes in the graph to the two nodes with a sufficient number of MPNN layers. According to the properties of $G(S)$, MAG-GNN will move the node tuples so that they sit on the opposite graphs in the super graph. They identify the type of the two graphs, determining if $G(S)$ is positive. Note that, in this case, we execute MAG-GNN \textbf{once} to obtain the discriminating node tuple independent of the super graph diameter.

    On the other hand, if we use random actions, because node markings only help identify the graphs that the markings land on, marking nodes on any location other than the two opposite graphs will not help identify the super graph. The probability of random actions choosing the opposite graphs is $\frac{1}{4n-1}$. Consequently, the expected number of random actions required to identify the super graph is $4n-1$.
\end{proof}

\subsection{Proof of Theorem \ref{thm:pfgnn}}
We first describe the PF-GNN algorithm; we refer the readers to \citet{pfgnn} for more details.

PF-GNN starts with a set of particles for a graph $G$,
\begin{equation}
    \langle(G,H_1^m), w_1^k\rangle_{m=1:M}
\end{equation}
where $H$ is the node embedding of $G$ using a MPNN $g$. For particle $m$ in step $t$, PF-GNN uses a policy function $P(V|H_t^m)$ to compute a distribution of nodes and sample a node to individualize. After using an MLP to update the embedding of the individualized node, PF-GNN uses GNN to refine all node embeddings. Specifically,
\begin{equation}
    \begin{split}
        v\sim P(V|H_t^m), [H_t^m]_v=\text{MLP}([H_t^m]_v), H_{t+1}^m=g(H_t^m)
    \end{split}
\end{equation}
Then PF-GNN updates the particle weights as follows,
\begin{equation}
    w_{t+1}^m=\frac{f_{obs}(H_{t+1}^m)\cdot w_t^m}{\sum_k f_{obs}(H_{t+1}^m)\cdot w_t^m}
\end{equation}
where $f_{obs}$ is a function that measures the quality of the refined embeddings to a scalar value. PF-GNN then performs particle resampling from the distribution of $\langle w_{t+1}^m\rangle_{m=1:M}$ and reset weights to $1/M$. Eventually, all $H_{t+1}^m$ node representations undergo a readout function to generate graph embedding.

\begin{proof}
To prove MAG-GNN captures PF-GNN, we need to show that in each step, the node representation generated by PF-GNN and MAG-GNN are identical, resulting in the same node sampling distribution, resampling process, and weight update. Specifically, for a $M$-particle, $T$-step PF-GNN, we implemented a MAG-GNN with $M$ node tuples of order $T$ using $T$ RL steps. The node tuples are still randomly initialized. The state matrix $W$ is $k$-by-$(1+T)$, where $[W]_{m,i}=0, i\neq T+1$ indicates the $i$-th node in node tuple $m$ has not been individualized, $[W]_{m,i}=j, i\neq T+1$ indicates the $i$-th node in node tuple $m$ is individualized at time step $j$, and $[W]_{m,T+1}$ is the weight in PF-GNN.  

Let $\bm{v}=(v_1,...,v_t)$ be the sequence of nodes that PF-GNN used to individualize. We first note that there exists an MPNN applied onto the node marked graph $G(\bm{v})$ with a sufficient number of layers that output the same value as the final output, $H_t$, of PF-GNN. Specifically, let $T$ be the number of layers of $g$ in the MPNN, and then we can build a $tT+t$-layers MPNN $g_p$ such that it alternates between $T$ layers MPNN that is a copy of $g$ and one layer MPNN that only update node features on marked nodes using the MLP of PF-GNN. Then, as long as we know the sequence, we can mimic a PF-GNN with an MPNN on node-marked graphs.

We then implement an update function $f_W$ to log the step when a node $v_i$ in the tuple $m$ is marked that is $[W]_{m,t}=t$. We then set the Q-network to never output positive scores to the tuple index whose corresponding entries in $W$ are greater than zero so that we fix the individualized nodes. $f_W$ also update $[W]_{m,T+1}$ to be equal to the weight after the resampling process. Then, by using the $tT+t$-layers MPNN consists of $g$ as $g_p$ and another $tT+t$-layers MPNN consists of the MPNN of $P(V|H_t)$ as $g_{rl}$, MAG-GNN can replicate the output of PF-GNN.
\end{proof}
\section{Introduction to deep Q-learning}
\label{sec:rl}
Deep Q-learning (DQN) is a neural network variant of Q-learning which is a popular algorithm for solving the Markov decision process (MDP). MDP is defined by a tuple $(S, A, P, R, \gamma)$, where:
\begin{itemize}
    \item $S$ is a set of states.
    \item $A$ is a set of actions.
    \item $P$ is a transition function that defines the probability of outcomes states after a particular action. For MAG-GNN, the transition is deterministic.
    \item $R$ is a reward function defining the reward received by an action in a particular state.
    \item $\gamma$ is a discount factor determining future rewards' importance.
\end{itemize}
Agents in an MDP aim to learn a policy that maps states to actions with maximal expected rewards, and Q-learning generates such agents. Specifically, Q-learning maintains a Q-table of Q-values $Q(s, a)$, representing the expected cumulative rewards of taking action $a$ in state $s$. During training the Q-values are updated by,
\begin{equation}
    Q(s,a) = Q(s,a) + \alpha(r + \gamma * \max_{a'}(Q(s',a')) - Q(s,a))
\end{equation}
where $\alpha$ is the learning rate, $r$ is the instant reward of action $a$ in $s$ and $s'$ is the resulting state of $a$. In more complex systems, keeping enormous Q-tables for all actions and states is infeasible, and DQN uses deep neural networks to build a Q-network to approximate $Q(s,a)$. The Q-network takes $s$ as input and output $Q(s,a)$ for all possible actions $a$. A loss function to train the Q-network is,
\begin{equation}
    \mathcal{L}=\mathbb{E}[(r+\gamma*\max_{a'}Q(s',a')-Q(s,a))^2]
\end{equation}
A widely adopted method to train the Q-network is the Experience Replay. In each step, the RL agent interacts with the environment to store experience tuples $E=(s,a,r,s')$, and sample a set of previous experiences to train the Q-network. This technique effectively improves the sample efficiency. We adopt $\epsilon$-greedy training strategy, where the agent takes random actions with a probability of $\epsilon$. We start with $\epsilon=1$ and gradually decrease it to zero. This ensures that the agent explore the graph and learns the good policy.
\section{More discussion on the design of MAG-GNN}
\subsection{Action space}
\label{sec:action}
In Section \ref{sec:action}, we propose an action to substitute one node in the node tuple with another node. Such action requires a $O(m|V|k)$ Q-table for selecting the node tuple ($m$), selecting the node tuple index ($k$), and selecting the node in the graph. We also propose not to select node tuple to reduce the $m$ factor, then the action in \ref{eq:action} becomes
\begin{equation}
    \bm{v}_i' = a_{j,l}(\bm{v}_i)=([\bm{v}_i]_1,...,[\bm{v}_i]_{j-1}, v_l, [\bm{v}_i]_{j+1},...[\bm{v}_i]_k)
\end{equation}
The action space becomes $A=[k]\times \mathcal{V}$. In such a case, we need to enforce an order to update each node tuple sequentially or simultaneously update them. In the experiments, we choose to update all node tuples for better parallelizability simultaneously. 

A seemingly more natural design choice is directly replacing the current node tuple with another one. However, it has $O(|V^k|)$ possible actions for one node tuple. Its corresponding Q-table size is also exponential to $k$. The cost of computing such a table may equal that of running the complete subgraph GNN, which violates our motivation for complexity reduction.

The state matrix $W$ serves as a tracker of previous states. The choice of update function $f_W$ can vary. It can simply be a mean pooling function on the embeddings of marked nodes. It can also be an LSTM pooling function. While using matrix $W$ has good theoretical properties as demonstrated in Appendix \ref{sec:proof}, we found it is not as helpful practically and becomes a confounding factor under the SIMUL training scheme because the states matrix stored in the memory does not properly reflect the prediction GNN. Hence, we use zero-initialized $W$ and an identity update function during training.
\subsection{Training paradigms}
\label{sec:pseudo}
We provide pseudo-codes for each training paradigm of MAG-GNN. All training paradigms are based on experience replay, where in each step the agent obtains experiences from the environment, stores the experiences into memory, and retrieves other experiences from memory to optimize the Q-learning agent. Since we are training the agent (RL GNN) while the environment (prediction GNN) is evolving, we propose different training paradigms to better model both sides. Different training paradigms differ mainly in the schedule to train the two sides. ORD paradigm first trains the environment without the agent and then trains the agent in the fixed environment. SIMUL paradigm trains the agent and environment together, where an environment training step is followed by the agent collecting experience from the current environment and optimizing according to past experience. The SIMUL paradigm is proposed because the trained environment might overfit the data and hence give incorrect rewards to the agent. We provide pseudo-codes for the paradigms.

\begin{algorithm}[t]
\caption{RL-Experience}
\renewcommand{\algorithmicrequire}{\textbf{Input:}}
\renewcommand{\algorithmicensure}{\textbf{Output:}}
\begin{algorithmic}[1]
\REQUIRE Node tuples $U$, batch of data $B$, buffer $D$, RL agent $a$ and GNN $g_{rl}$
\ENSURE New node tuples.
\STATE Use $a$ and $g_{rl}$ to generate a new node tuples $U'$ according to equation (9) and (10)
\STATE Obtain rewards $r$ from $U'$, $U$, and $B$ according to equation (8)
\STATE Store the experience tuple $(U',U,B,r)$ to $D$
\RETURN $U'$
\end{algorithmic}
\end{algorithm}
\begin{algorithm}[t]
\caption{ORD-Train}
\renewcommand{\algorithmicrequire}{\textbf{Input:}}
\renewcommand{\algorithmicensure}{\textbf{Output:}}
\begin{algorithmic}[1]
\REQUIRE A target dataset of $(\mathcal{G},\mathcal{Y})$. A graph prediction model $f_p$, an RL agent $a$ with RL GNN $g_{rl}$.
\ENSURE A trained RL agent $a$ that output the most discriminative node tuples and a corresponding prediction GNN $f_p$.
\WHILE{$f_p$ is not converged}
\STATE $B\leftarrow$ a batch from $(\mathcal{G},\mathcal{Y})$
\STATE Sample random node tuples $U$ for graphs in $B$
\STATE Optimize $f_p$ according to equation (4) w.r.t $B$ and $V$.
\ENDWHILE
\STATE $D\leftarrow$ memory buffer
\STATE Call RL-Experience until $D$ has enough memory
\WHILE{$g_{rl}$ is not converged}
\STATE $B\leftarrow$ a batch from $(\mathcal{G},\mathcal{Y})$
\STATE Sample random node tuples $U$ for graphs in $B$
\STATE RL-Experience($U$, $B$, $D$, $a$, $g_{rl}$)
\STATE Sample experience tuples $E$ from $D$
\STATE Optimize $g_{rl}$ using the $E$
\ENDWHILE
\RETURN $f_p$, $a$, $g_{rl}$
\end{algorithmic}
\end{algorithm}

\begin{algorithm}[t]
\caption{SIMUL-Train}
\renewcommand{\algorithmicrequire}{\textbf{Input:}}
\renewcommand{\algorithmicensure}{\textbf{Output:}}
\begin{algorithmic}[1]
\REQUIRE A target dataset of $(\mathcal{G},\mathcal{Y})$. A graph prediction model $f_p$, an RL agent $a$ with RL GNN $g_{rl}$.
\ENSURE A trained RL agent $a$ that output the most discriminative node tuples and a corresponding prediction GNN $f_p$.
\STATE $D\leftarrow$ memory buffer
\STATE Call RL-Experience until $D$ has enough memory
\WHILE{$f_p$ and $g_{rl}$ are not converged}
\STATE $B\leftarrow$ a batch from $(\mathcal{G},\mathcal{Y})$
\STATE Sample random node tuples $U$ for graphs in $B$
\STATE $U'$=RL-Experience($U$, $B$, $D$, $a$, $g_{rl}$)
\STATE Optimize $f_p$ according to equation (4) w.r.t $B$ and $U'$
\STATE $B'\leftarrow$ a batch from $(\mathcal{G},\mathcal{Y})$
\STATE Sample random node tuples $U$ for graphs in $B'$
\STATE RL-Experience($U$, $B$, $D$, $a$, $g_{rl}$)
\STATE Sample experience tuples $E$ from $D$
\STATE Optimize $g_{rl}$ using the $E$
\ENDWHILE
\RETURN $f_p$, $a$, $g_{rl}$
\end{algorithmic}
\end{algorithm}

\begin{algorithm}[t]
\caption{PRE-Train}
\renewcommand{\algorithmicrequire}{\textbf{Input:}}
\renewcommand{\algorithmicensure}{\textbf{Output:}}
\begin{algorithmic}[1]
\REQUIRE A target dataset of $(\mathcal{G},\mathcal{Y})$. A graph prediction model $f_p$, an RL agent $a$ with RL GNN $g_{rl}$ trained with either SIMUL or ORD on a different dataset.
\ENSURE A trained RL agent $a$ that output the most discriminative node tuples and a corresponding prediction GNN $f_p$.
\WHILE{$f_p$ is not converged}
\STATE $B\leftarrow$ a batch from $(\mathcal{G},\mathcal{Y})$
\STATE Sample random node tuples $U$ for graphs in $B$
\STATE $U'$=RL-Experience($U$, $B$, $D$, $a$, $g_{rl}$)
\STATE Optimize $f_p$ according to equation (4) w.r.t $B$ and $U'$
\ENDWHILE
\RETURN $f_p$, $a$, $g_{rl}$
\end{algorithmic}
\end{algorithm}
\subsection{Other sampling-based methods}
\label{sec:sample}
The advantage of MAG-GNN over other subgraph sampling methods, including PF-GNN~\cite{pfgnn} and DropGNN~\cite{dropgnn}, is that it effectively chooses the most discriminative subgraphs. DropGNN drops nodes with equal probability; PF-GNN uses an MPNN to model the drop probability of nodes which reduces to the uniform distribution when the graph is regular and has no features. According to Figure \ref{fig:mot_figure}, there is still a high chance that the randomly sampled subgraphs are not informative. On the contrary, MAG-GNN starts with random subgraphs but uses previous knowledge about the correlation between subgraphs to refine the likelihood of a subgraph being informative. RL training helps the model acquire such knowledge. Just as Theorem \ref{thm:qgnnbetter} points out, previous methods are essentially MAG-GNN with random action. In contrast, learned actions can significantly improve efficiency and effectiveness.
\subsection{Limitations}
\label{sec:limit}
As mentioned in Section \ref{sec:train}, MAG-GNN does not train as well on datasets with labels, and the training time is longer than subgraph GNNs despite better inference complexity. Note that we are employing the basic Q-learning techniques in MAG-GNN to demonstrate the framework's power, and we can adopt more robust training methods from the large RL toolbox to improve training time. We leave this to future work.

An advantage of subgraph GNN is its equivariance property. Given the same graph up to isomorphism, subgraph GNNs always generate the same embedding for the graph, whereas MAG-GNN sacrifices such property for better efficiency. However, MAG-GNN still provides a higher level of certainty than other sampling-based and randomization-based non-equivariant GNNs, because when the initial node marking is fixed, MAG-GNN can also generate the same embedding for the graph. Then, suppose we can deterministically generate initial node markings; we also grant MAG-GNN equivariance. Such initial node marking can be computed from canonical labeling or previous knowledge about the graph.

While we see MAG-GNN's potential in node-level tasks in Section \ref{sec:exp}, MAG-GNN is primarily defined at the graph level, and its reward and actions might not be the most appropriate for node-level tasks. We conduct the evaluation in Section \ref{sec:motivation} on the Cycle counting dataset; we observe that even for 6-CYCLE counting, there exist node marked graphs that result in near zero error while our Q-learning framework fails to identify such node tuple.  We suspect this is because the MAG-GNN's reward is defined at graph-level, and taking a plain average of node error does not faithfully capture the environment. This leads to exciting research paths extending MAG-GNN to node-level or even edge-level tasks.
\section{Complexity analysis}
\label{sec:complexity}
The complexity of MAG-GNN is straightforward. It runs MPNN multiple times; hence, the complexity is a multiple of MPNN ($O(T|V^2|)$). Specifically, for a MAG-GNN with $m$ $k$-tuples that runs $t$ steps, there is a total number of $O(mt)$ MPNN runs, incurring $O(mtT|V^2|)$ total complexity. The complexity of computing the Q-table is equal to the size of the Q-table, which is $O(m|V|k)$. We compute Q-tables $O(t)$ times, and the overall complexity for computing the Q-table is $O(mkt|V|)$. The overall complexity is $O(mkt|V|+mtT|V^2|)$. In practice, $O(mkt)<O(T|V|)$ as can be observed in Table \ref{tab:stats} and hence the complexity can be reduced to $O(mtT|V^2|)$. Note that subgraph GNNs have the complexity of $O(T|V^k|)$, which grows exponentially with $k$, while MAG-GNN's complexity grows linearly with $k$ even when $k$ is large.

In equation \ref{eq:q_net}, we propose to use graph embedding function $f_p$ to represent the current state; however, an alternative is to use $f_{rl}$ to describe the current state. This allows us not to compute graph embedding for prediction but only compute that for Q-table. This does not change the complexity but can reduce the number of MPNN runs in half, improving efficiency.
\section{More experimental results}
\label{sec:moreexp}
\begin{table}[t]
    \centering
    \caption{BREC dataset results ($\uparrow$)}
    \begin{tabular}{@{}lcccc}
\toprule
 & \multicolumn{2}{c}{Basic (60)} &\multicolumn{2}{c}{Regular (140)} \\ \cmidrule(l){2-3} \cmidrule(l){4-5} 
Model & Number& Accuracy& Number & Accuracy\\ \midrule
NGNN~\cite{ngnn} & 59 & 98.3 & 48 & 34.3\\
SSWL+~\cite{sswl} & 60 & 100 & 50 & 35.7\\
I$^2$GNN~\cite{i2gnn} & 60 & 100& 100& 71.4\\
OSAN~\cite{osan} & 56 & 93.3 & 8 & 5.7\\
MAG-GNN & 49 & 81.6 & 61 & 43.5 \\ \bottomrule
\end{tabular}
    \label{tab:brec}
\end{table}
\begin{figure}[h]
    \centering
    \includegraphics[width=0.5\linewidth]{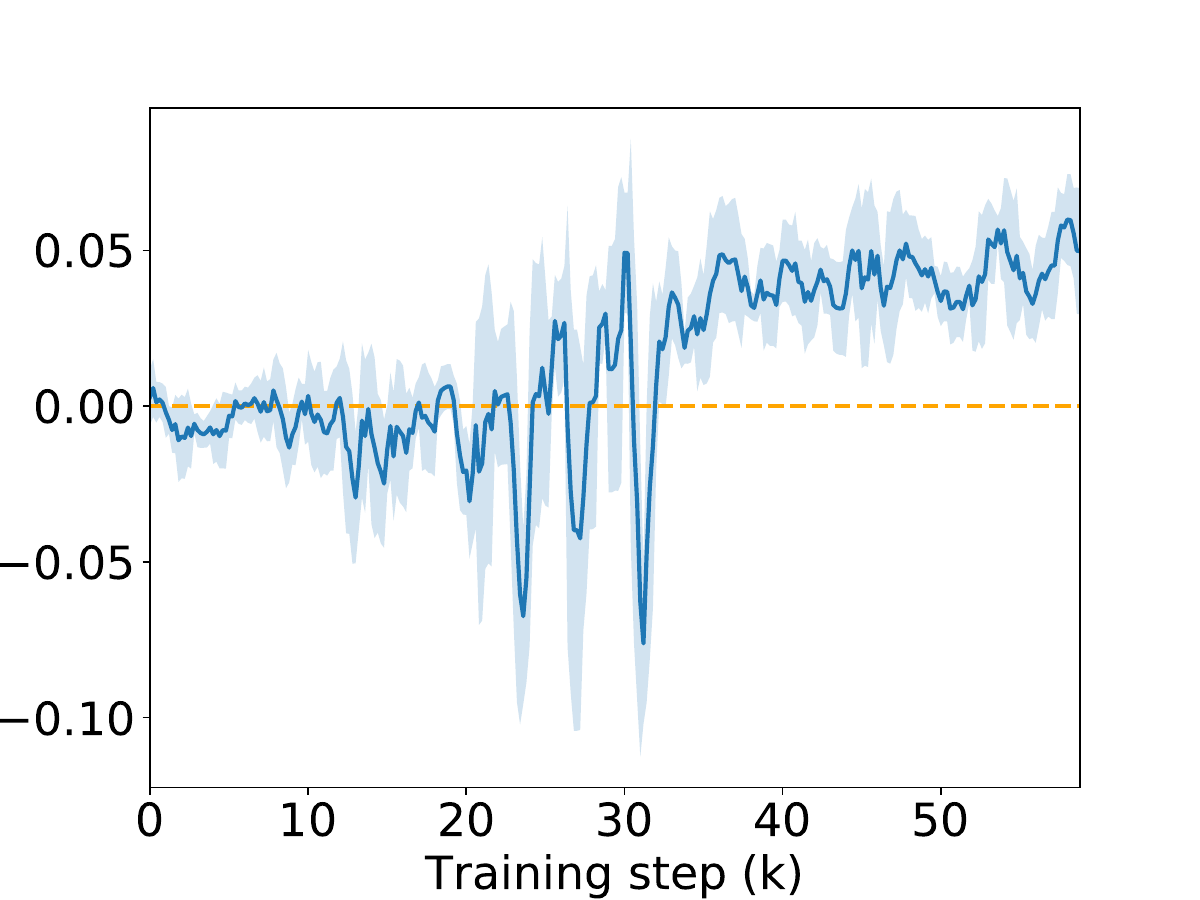}
    \caption{Reward increases over time.}
    \label{fig:reward}
\end{figure}
Table \ref{tab:brec} shows the results on the BREC dataset's basic and regular graph subsets \cite{brec}. BREC dataset contains 400 pairs of graphs that require different levels of expressiveness to distinguish. The regular graph dataset contains strongly regular and distance-regular graphs that cannot be distinguished by 3-WL GNNs. We apply a fixed 2-node-tuple MAG-GNN to these pairs and test 1000 times with different initial node tuples. We say the model distinguishes the pair if the model correctly classifies the pair at least 950 times. We then count the ratio of correctly distinguished pairs. We observe that MAG-GNN falls short on the basic graph, possibly due to the excessive expressiveness and the difficulty it poses to training. However, we also observe that MAG-GNN performed well on regular graphs, outperforming NGNN and SSWL+ with <3-WL expressiveness. While outcompeted by I$^2$GNN, MAG-GNN is considerably more efficient. We also noticed that MAG-GNN significantly improves the performance on regular graphs over OSAN, further validating that RL-selected subgraphs carry more information.

In Figure \ref{fig:reward}, we show the reward improvement over the training step on the SR25 dataset. We plot the mean reward of every 500 steps and the standard deviation of the 500 steps by the shaded range. We can see that the reward is close to zero at the initial stage of training because the $\epsilon$ is set to 1, and the agent essentially takes random actions. After a period of searching where the reward stays at around 0, the output node tuples gradually become informative. Eventually, the step-average reward converges at 0.05. We see a few sudden drops in reward during training due to the "overestimation bias" commonly seen in RL. Nevertheless, the agent can correct the mistake and eventually converge. More advanced deep Q-learning techniques like double Q-learning might resolve the issue, and we leave this as future work.

\begin{figure}
    \centering
    \includegraphics[width=0.98\linewidth, trim={0.8cm 3cm 6.2cm 0}, clip]{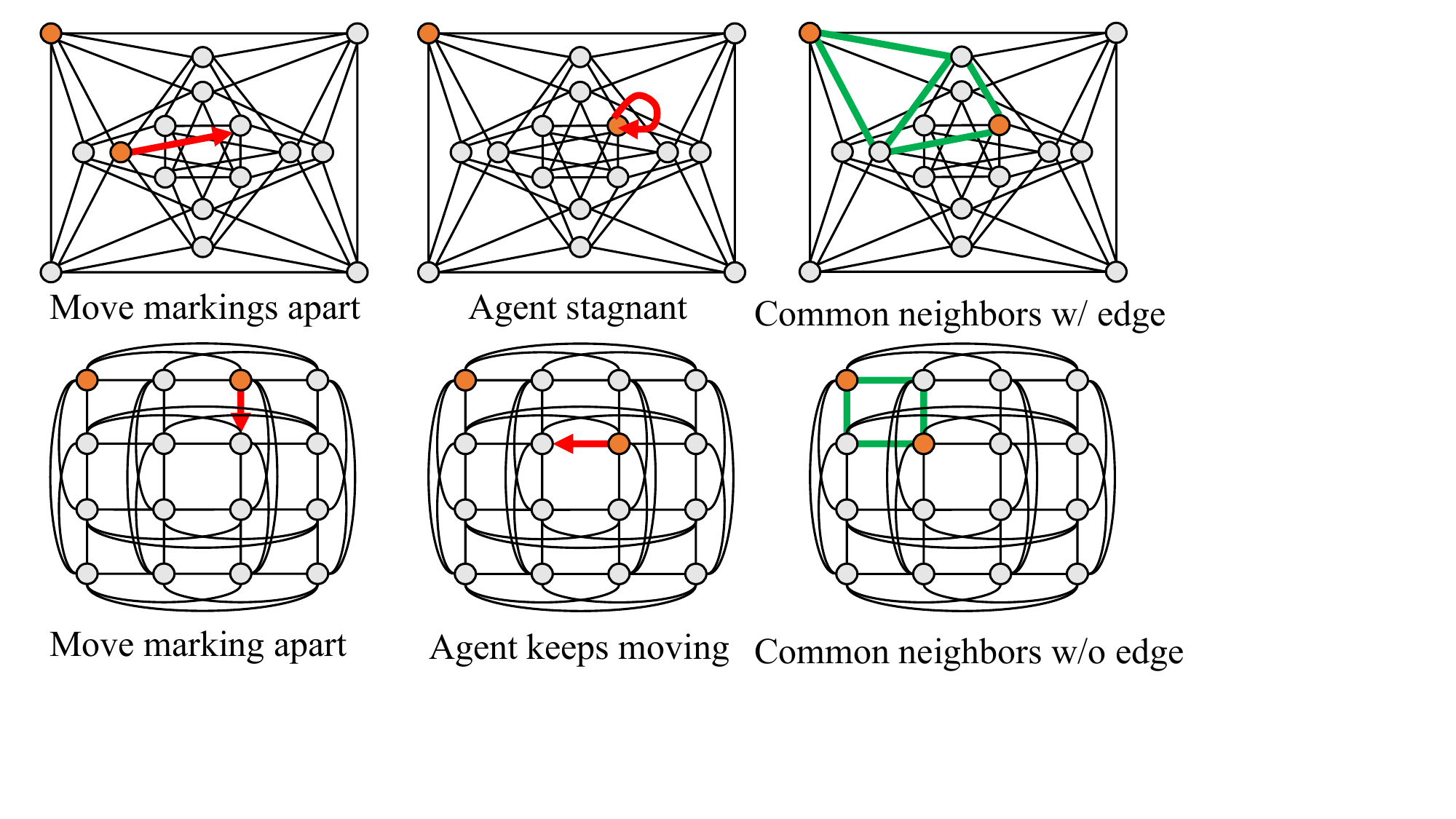}
    \caption{MAG-GNN example on Shrikhande graph (Top) and Rood 4-by-4 graph (Bottom). Red arrows show how MAG-GNN moves the node tuples. Green edges show the difference in the subgraphs defined by the resulting node tuples.}
    \label{fig:srgexample}
\end{figure}
We conducted the experiments on the Shrikhande graph versus the Rook 4-by-4 graph toy example. The task is to differentiate the two strongly regular graphs with the same configuration and hence cannot be distinguished by MPNN and node-rooted subgraph GNNs. We train a MAG-GNN with 2-node markings to break the 3-WL expressivity bound. After training, the MAG-GNN generates the trajectory given the initial 1-distance random node markings as shown in Figure \ref{fig:srgexample}. We can observe that MAG-GNN first learns to push the two marked nodes apart in both graphs. However, in the Shrikhande graph, after it finds a marked node pair whose common neighbors are connected, the agent becomes stagnant, while in the Rook 4-by-4 graph, the agent is never stagnant and keeps moving one marked node, because none of the common neighbors of 2-distanced node pairs in the Rook 4-by-4 graph are connected. Such information is easy to be learned by a downstream MPNN.
\section{More experiment details}
\label{sec:expdetail}
\begin{table}[t]
    \centering
    \begin{minipage}{\linewidth}
    \centering
    \caption{Hyperparameters Summary}
    \begin{tabular}{@{}lcccc@{}}
\toprule
Method & QM9 & ZINC & ZINC-FULL & CYCLES \\ \midrule
Learning rate & \multicolumn{4}{c}{$0.001$} \\
Weight decay & $0$ & $0$ & $0$ & $0$ \\
\# GNN Layers & $5$ & $6$ & $6$ & $6$ \\
Jumping Knowledge & Sum & Sum & Sum & Sum \\
m & $\{2,3,4\}$ & $\{2,3,4\}$ & $\{2,3,4\}$ & $\{2,3,4\}$ \\
k & $\{3,4,5,6\}$ & $\{3,4,5,6\}$ & $\{3,4,5,6\}$ & $\{3,4,5,6\}$ \\
t & $2$ & $2$ & $2$ & $2$ \\
\# Epochs & $2000$ & $4000$ & $500$ & $5000$ \\
Batch size & $512$ & $512$ & $512$ & $500$ \\
Memory size & $3M$ & $300K$ & $300K$ & $300K$ \\
Sync rate & $3000$ & $3000$ & $3000$ & $3000$ \\ 
\# runs & 4 & 10 & 3 & 10\\\bottomrule
\end{tabular}
     \label{tab:trainingleft}
    \end{minipage}%
    \\
    \begin{minipage}{\linewidth}
    \centering
    \caption{Hyperparameters Summary (cont.)}
    \begin{tabular}{@{}lcccc@{}}
\toprule
Method & MOLHIV & EXP & SR25 & CSL \\ \midrule
Learning rate & \multicolumn{4}{c}{$0.001$} \\
Weight decay & $0.001$ & $0$ & $0$ & $0$ \\
\# GNN Layers & $6$ & $4$ & $4$ & $4$ \\
Jumping Knowledge & Sum & None & None & None \\
m & $\{2,3,4\}$ & $\{1,2,3\}$ & $\{1,2,3\}$ & $\{1,2,3\}$ \\
k & $\{3,4,5,6\}$ & $\{1,2,3\}$ & $\{1,2,3\}$ & $\{1,2,3\}$ \\
t & $2$ & $4$ & $4$ & $4$ \\
\# Epochs & $1000$ & $500$ & $10000$ & $500$ \\
Batch size & $128$ & $64$ & $32$ & $64$ \\
Memory size & $300K$ & $30K$ & $30K$ & $30K$ \\
Sync rate & $3000$ & $100$ & $150$ & $150$ \\ 
\# runs & 10 & 10 & 10 & 10\\ \bottomrule
\end{tabular}
    \label{tab:trainingright}
    \end{minipage}%
\end{table}
\begin{table}[t]
    \centering
    \caption{Datasets statistics.}
    \begin{tabular}{@{}lccccc@{}}
\toprule
    Dataset & \#Graphs &Avg. \#Nodes & Avg. \#Edges  &Task type   \\ \midrule
    CYCLE & 5,000 & 18.8 & 31.3  & Node regression\\
    QM9 & 130,831 & 18.0 & 18.7 & Graph regression\\
    ZINC &12,000&23.2 & 24.9  & Graph regression\\
    ZINC-FULL &249,456 & 23.1 & 24.9 & Graph regression\\
    ogbg-molhiv & 41,127 & 25.5 & 27.5 & Graph classification  \\
\bottomrule
\end{tabular}
    \label{tab:stats}
\end{table}
\subsection{Datasets statistics}
We took the CYCLE dataset from \cite{akgin} repository. All training and testing are on standardized targets following the previous practice \cite{akgin, i2gnn}. The task is regression, and we use continuous value to approximate the discrete number of cycles. We use the QM9 dataset provided by Pytorch-Geometric \cite{pyg}, and we use a train/valid/test split ratio of 0.8/0.1/0.1. All QM9 targets are standardized during training but mapped back to their actual value for prediction and testing. We use the ZINC dataset provided by Pytorch-Geometric \cite{pyg} and use the official split. We take OGBG-MOLHIV dataset from the Open Graph Benchmark package \cite{ogb} and use their official split. For CSL and EXP, we follow previous practices and use 10-fold cross-validation to train the model and report the mean results. For SR25, since there are only 15 different graphs, we follow previous practices and use the complete data for training, valid, and test sets to examine the expressivity of the model.

\subsection{Training details}
We summarize the hyperparameters used for different datasets in Table \ref{tab:trainingleft} and \ref{tab:trainingright}.

Curly bracket means the range of hyper-parameters we search. Memory size is the size of the memory buffer for experience and replay. Sync rate is the number of optimization steps before the target Q-network sync with the train Q-network. For ZINC datasets, we additionally use learning rate decay with a rate of 0.5 for every 200 epochs. All models are implemented in DGL~\cite{dgl} and PyTorch~\cite{pytorch}. We only use Pytorch Geometric~\cite{pyg} to get the datasets.

Since there is a random factor in MAG-GNN (initial node tuples), all tests are repeated ten times, and we take the average score as the test score of one run. We ran experiments multiple times with different random seeds to take the average scores.
\end{document}